\newtcolorbox{leftvrule}[1][]{colback=white,
boxrule=0pt, boxsep=0pt, breakable, enhanced jigsaw, borderline west={1.5pt}{0pt}{black},
before skip=10pt,after skip=10pt,
#1}
\DeclareMathOperator{\E}{\mathbb{E}}
\DeclareMathOperator*{\bmax}{\overset \bullet {max}}
\DeclareMathOperator{\mv}{MV}
\DeclareMathOperator{\rb}{\delta}
\DeclareMathOperator*{\argmax}{arg~max}
\newcommand{\algName}{Dyna-ATMQ\xspace}
\newcommand{\algNameShort}{ATMQ\xspace}
\newcommand{\gitLink}{\url{https://github.com/LAVA-LAB/ATM}}
\newtheorem{theorem}{Theorem}
\newtheorem{lemma}{Lemma}
\newtheorem*{lemma*}{Lemma}
\newcommand{\resizetable}[1]{\scalebox{0.8}{#1}}
\setlist[enumerate]{label={\arabic*)}}
\title{Act-Then-Measure: Reinforcement Learning for Partially Observable Environments with Active Measuring}
\author {
    Merlijn Krale,
    Thiago D. Simão,
    Nils Jansen
}
\begin{document}

\maketitle

\begin{abstract}

We study Markov decision processes (MDPs), where agents have direct control over when and how they gather information, as formalized by action-contingent noiselessly observable MDPs (ACNO-MPDs).
In these models, actions consist of two components: a \emph{control action} that affects the environment, and a \emph{measurement action} that affects what the agent can observe.
To solve ACNO-MDPs, we introduce the \emph{act-then-measure} (ATM) heuristic, which assumes that we can ignore future state uncertainty when choosing control actions.
We show how following this heuristic may lead to shorter policy computation times and prove a bound on the performance loss incurred by the heuristic.
To decide whether or not to take a measurement action, we introduce the concept of \emph{measuring value}.
We develop a reinforcement learning algorithm based on the ATM heuristic, using a Dyna-Q variant adapted for partially observable domains, and showcase its superior performance compared to prior methods on a number of partially-observable environments.

\end{abstract}

\section{Introduction}
In recent years, partially observable Markov decision processes (POMDPs) have become more and more widespread to model real-life situations involving uncertainty \citep{DBLP:journals/robotics/KormushevCC13,DBLP:journals/comsur/LeiTZLZS20,DBLP:journals/tits/SunbergK22}.
\emph{Active measure} POMDPs are an interesting subset of these environments, in which agents have direct control over when and how they gather information, but gathering information has an associated cost~\citep{DBLP:conf/ai/BellingerC0T21}.
For example, maintenance of a sewer system might require regular inspections \citep{jimenez2022deterioration}, or appropriate healthcare might require costly or invasive tests to be run \citep{DBLP:journals/csur/YuLNY23}.
In both cases, the risk or cost of gaining information needs to be weighted against the value of such information.

Reinforcement learning (RL) is a promising approach to handling problems where we must actively gather information.
However, due to the complexity of POMDPs, successes with RL methods in partially observable settings are still limited \citep{DBLP:journals/ml/Dulac-ArnoldLML21}.
One may circumvent this by focusing on  subsets of POMDPs which have certain exploitable properties.
For example, \citet{DBLP:conf/aistats/GuoDB16} proposed an efficient RL algorithm for small-horizon POMDPs, \citet{Simao2023spipomdp} investigates offline RL where finite histories provide sufficient statistics. 
Similarly, this paper focuses on a subset of active measure POMDPs with complete and noiseless observations, called  \emph{action contingent noiselessly observable} MDPs~\citep[ACNO-MDPs;][]{DBLP:conf/nips/NamFB21}.

For ACNO-MDPs, two RL algorithms already exist.
The first, AMRL-Q \citep{DBLP:conf/ai/BellingerC0T21}, is computationally inexpensive but uses a most-likely state approximation and always converges to non-measuring policies, causing poor performance in stochastic environments.
In contrast, the \emph{observe-then-plan} framework proposed by \citet{DBLP:conf/nips/NamFB21} performs well in smaller stochastic environments, but its reliance on general POMDP planners for policy optimization makes it computationally expensive.
Therefore, \emph{we investigate lightweight and high-performing reinforcement learning methods in stochastic ACNO-MDPs}.

In this paper, we propose a method for stochastic ACNO-MDPs\footnote{Stochastic MDPs are the opposite of deterministic MDPs where all probability distributions are Dirac.} in which we explicitly use knowledge of the setting for both learning and exploitation.
To this end, we propose the \textit{act-then-measure} heuristic, inspired by the $Q_{\text{MDP}}$ approach \cite{DBLP:conf/icml/LittmanCK95}, which drastically decreases policy computation times.
Since our method relies on a heuristic to compute the policy, we also investigate how much performance we can lose compared to the optimal policy, for which we prove an upper bound.
We then describe an algorithm based on Dyna-Q which uses this heuristic for RL in ACNO-MDPs.
We compare it empirically to previous methods, both in an environment designed to test whether algorithms can accurately determine the value of measuring, and in a standard RL environment.
In both, we find our algorithm outperforms AMRL-Q and \emph{observe-then-plan}, while staying computationally tractable for much bigger environments than the latter.

\paragraph{Contributions.}
The main contributions of this work are:
\begin{enumerate*}
    \item identifying limitations of previous RL approaches for ACNO-MPDs,
    \item introducing the \emph{act-then-measure} (ATM) heuristic,
    \item introducing the concept of \emph{measuring value}, and
    \item implementing \algName, an RL algorithm for ACNO-MDPs following the ATM heuristic.
\end{enumerate*}

\section{Background}
\label{sec:setting}
This section gives a formal description of ACNO-MDPs, then describes and analyzes RL methods for the setting.

\subsection{ACNO-MDPs}

\begin{figure}[tbp]
    \centering
    \includegraphics[width=0.7\columnwidth]{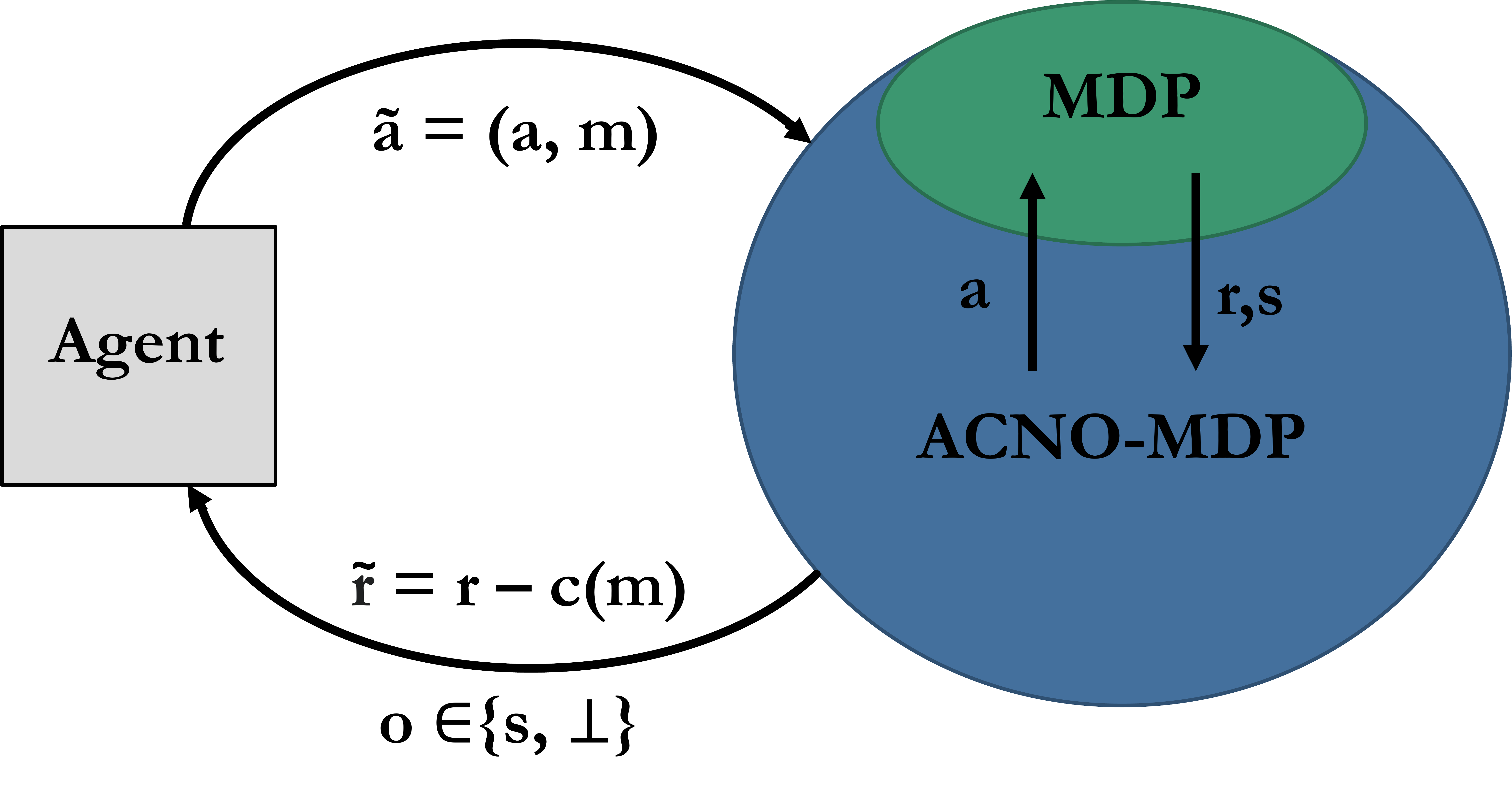}
    \caption{Agent-environment interaction in an ACNO-MDP. 
    The agent performs a control action $a$ and measurement $m$ at each time step $t$.
    The internal environment state is defined by an MDP and affected only by control actions.
    After each step, the agent receives a scalarized reward $\tilde{r} = r-C(m)$ and observation $o \in \{s, \bot\}$ (with $o=s \iff m=1$).}
    \label{fig:teaser}
\end{figure}

We define our problem as an \textit{action-contingent noiselessly observable MDP} \citep[ACNO-MDP;][]{DBLP:conf/nips/NamFB21}.
An ACNO-MDP is defined by a tuple $ \mathcal{M} = (S,\tilde{A}{=}A {\times} M,P,R,C,\Omega, O, \gamma)$, where $(S,A,P,R,\gamma)$ are the components of a standard MDP: $S$ is the state space, $A$ is the action space, $P(s'\mid s,a)$ is the transition function, $R(s,a)$ is the reward function and $\gamma \in [0,1]$ is the discount factor.
However, in the ACNO-MDP framework $\tilde{A}$ consists of pairs of \emph{control actions} and \emph{measurements}, taking the form $\tilde{a}=\langle a, m \rangle \in A \times M$, where $M = \{\text{not observe, observe}\}=\{0,1\}$.
A control action $a \in A$ affects the environment, while the measurement choice $m \in M$ only affects what the agent observes.
Following the typical notation from POMDPs, $\Omega$ is the observation space and $O$ the observation function, so $O(o\mid s',\langle a,m\rangle )$ is the probability of receiving observation $o \in \Omega$ when taking measurement $m$ and action $a$, after transitioning to the state $s'$.
In ACNO-MDPs all measurements are complete and noiseless, so we can define $\Omega = {S} \cup \{ \bot \}$, where $\bot$ indicates an empty observation.
Then, the observation function is defined as
$O(o\mid s',\langle a,1 \rangle ) = 1 \iff o=s'$, and $0$ otherwise.
Similarly, $O(o\mid s',\langle a,0 \rangle) = 1 \iff o=\bot$, and $0$ otherwise.
Measuring has an associated cost $C(0) = 0$ and $C(1) = c$ (with $c \geq 0$), which gets subtracted from our reward, giving us a \textit{scalarized-reward} $\tilde{r}_t = R(s_t,a_t) - C(m_t)$.

Agent-environment interactions for ACNO-MDPs are visualized in \cref{fig:teaser}.
Starting in some initial state $s_0$, for each time-step $t$ the agent executes an action-pair $\langle a_t, m_t \rangle$ according to a policy $\pi$.
In general, these policies are defined for a belief state $b_t$, a distribution over the states representing the probability of being in each state of the environment, summarising all past interactions.
After executing $\langle a_t, m_t \rangle$ in $s_t$, the environment transitions to a new state $s_{t+1} \sim P(\cdot \mid s_t, a_t)$, and returns to the agent a reward $r_t = R(s_t,a_t)$, a cost $c_t = C(m_t)$ and observation $o_{t+1} \sim O(\cdot \mid s_{t+1},\langle  a_t,m_t\rangle )$.
The goal of the agent is to compute a policy $\pi$ with the highest expected total discounted scalarized-reward
$
  V(\pi, \mathcal{M}) = \E_{\pi,\mathcal{M}} \left[ \sum_t \gamma^t \tilde{r}_t \right].
$

In this paper, we will mainly focus on reinforcement learning in ACNO-MDPs.
We assume the agent only has access to the total number of states and the signals returned by the environment in each interaction, but otherwise has no prior information about the dynamics of the environment.

\subsection{Q-learning for ACNO-MDPs}

\citet{DBLP:conf/ai/BellingerC0T21} propose to solve the ACNO-MDP problem using an adaptation of Q-learning \citep{DBLP:journals/ml/WatkinsD92}.
To choose the best action pair, the agent estimates both the transition probability function and value functions with tables $\hat{P}$ and $Q$ of sizes $|S \times A \times S|$ and $|S \times \tilde{A}|$, respectively.
Both are initialized uniformly, except that all actions with $m=1$ are given an initial bias in $Q$ to promote measuring in early episodes.

Beginning at the initial state, for every state $s_t$ the agent executes an $\epsilon$-greedy action-pair $\langle a_t,m_t \rangle$ according to $Q$.
When $m_t=1$, the successor state $s'=s_{t+1}$ is observed so the algorithm updates the transition probability $\hat{P}( \cdot \mid s_t, a_t)$.
When $m_t=0$, AMRL-Q does not update $\hat{P}$ and assumes the successor state is the \textit{most likely next state} according to $\hat{P}$:
\[
    s' = \argmax_{s\in S} \hat{P}( s \mid s_t, a_t).
\]
Using the reward $r_t$ and the (potentially estimated) successor state $s'$, AMRL-Q updates both $Q(s_t,\langle a_t,0 \rangle)$ and $Q(s_t,\langle a_t,1 \rangle)$, as follows:
\begin{equation}
\begin{split}
    Q(s_t,& \langle a_t, m\rangle) \leftarrow  (1-\alpha)  Q(s_t, \langle a_t, m\rangle) \\  &+ \alpha \left[r_t - C(m) + \gamma \max_{a',m'}Q(s', \langle a', m' \rangle)\right].
\end{split}
\end{equation}

Although AMRL-Q is conceptually interesting and has very low computation times, in practice the algorithm has some considerable shortcomings:

\begin{figure}[tb]
\centering
\includegraphics[width=0.6\columnwidth]{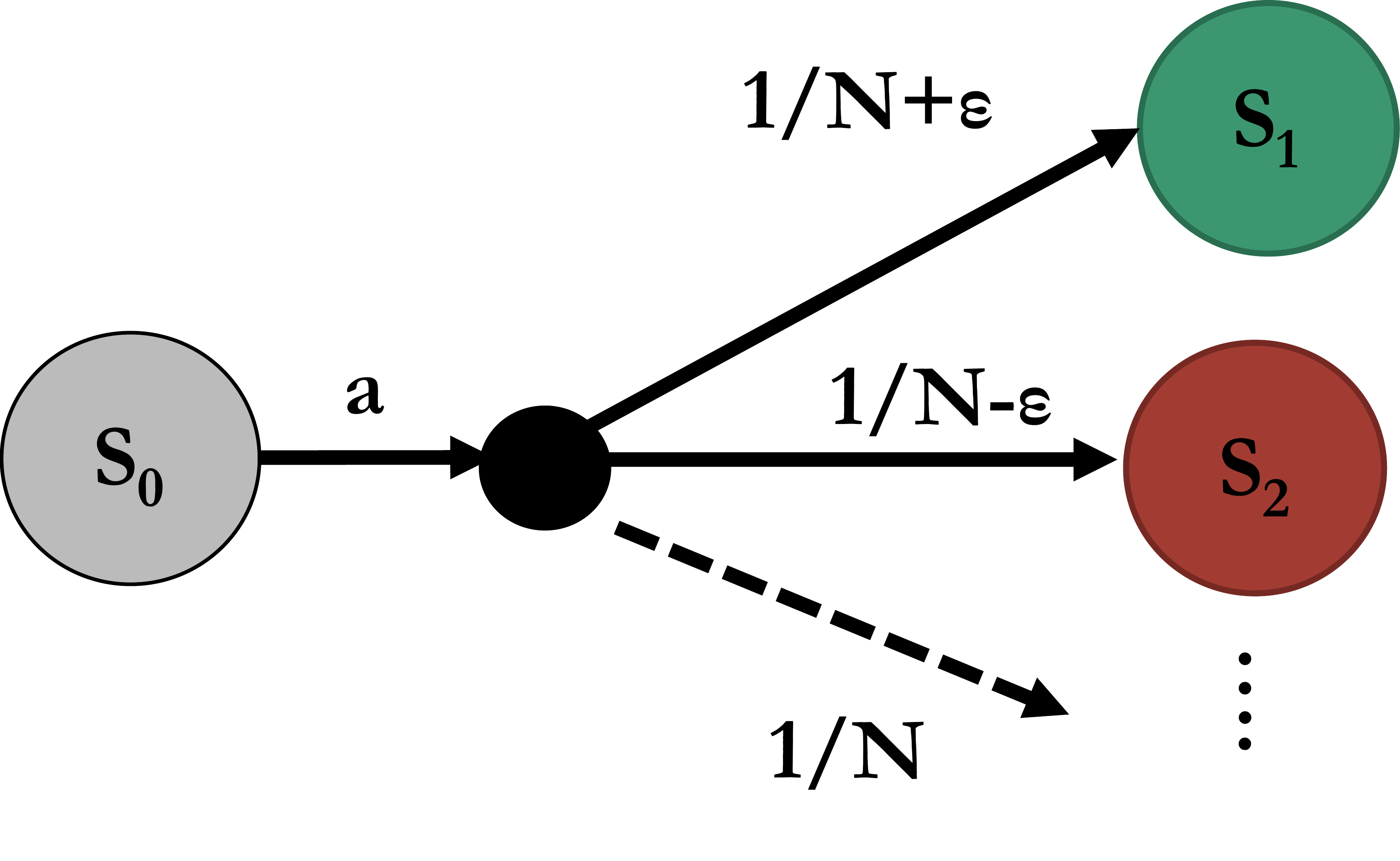} 
\caption{An ACNO-MDP where the value $Q(s_0, \langle a, 0\rangle )$ based on the most-likely successor state can be made arbitrarily inaccurate. In this example, using a most likely state means considering only $s_1$, even though the probability of reaching this state is only $1/N + \epsilon$, with $N$ the number of successor states (which is only bounded by $|S|$), neglecting the probability of reaching the remaining successor states.}
\label{fig:MLFailExample}
\end{figure}

\paragraph{AMRL-Q does not measure after convergence.}
Apart from its $\epsilon$-greediness,  for any state $s$ AMRL-Q takes a measuring actions $\langle a,1 \rangle$ if this action-measurement pair has the highest Q-value.
In particular, this means that $Q(s,\langle a,1 \rangle) > Q(s,\langle a,0 \rangle)$ must hold.
However, since these Q-values get updated simultaneously and with the same $r_t$ and $s'$, but $(r_t - C(m))$ is always lower for $m=1$, $Q(s,\langle a,1 \rangle)$ converges to a value lower than $Q(s,\langle a,0 \rangle)$.
This means AMRL-Q only converges to non-measuring policies, which is suboptimal for those stochastic environments where the optimal policy requires taking measurements.

\paragraph{AMRL-Q ignores the state uncertainty.}
As visualized in \cref{fig:MLFailExample}, the most-likely successor state used in AMRL-Q can give arbitrarily inaccurate approximations of the current state.
Apart from sub-optimal action selection, this may also cause inaccuracies in the model in later steps, since AMRL-Q makes no distinction between measured and non-measured states for model updates.

\subsection{Solving ACNO-MDP via POMDPs}
\citet{DBLP:conf/nips/NamFB21} introduce two frameworks for solving tabular ACNO-MDPs.
The first, named \emph{observe-before-planning}, has an initial exploration phase in which the agent always measures to learn an approximated model.
After this phase, a generic POMDP-solver computes a policy based on the approximated model.
The second framework, named \emph{observe-while-planning}, starts by using a POMDP-solver on some initial model from the start, and updates the model on-the-fly based on the measurements made.
For both frameworks a specific implementation is tested, using \emph{episodic upper lower exploration in reinforcement learning} \citep[EULER;][]{DBLP:conf/icml/ZanetteB19} for the exploration phase and \emph{partially observable Monte-Carlo planning} \citep[POMCP;][]{DBLP:conf/nips/SilverV10} as a generic POMDP-solver.
Both algorithms outperform the tested generic POMDP RL-method, with \emph{observe-before-planning} performing slightly better overall.
We therefor focus on this framework in this paper.
Apart from some more specific disadvantages of using POMCP for ANCO-MDPs (which we describe more fully in \cref{sec:ACNOChanges}), we note one general shortcoming of this framework.

\paragraph{\emph{Observe-before-planning} only optimises information gathering.}
While \emph{observe-before-planning} makes explicit use of the ACNO-MDP structure in its exploration phase, for exploitation it relies only on a generic POMDP-solver.
These solvers generally have high computational complexity, which limits what environment they can be employed in.
In contrast, a method that uses the ACNO-MDP structure (where only control actions affect the underlying state) could in principle solve larger and more complex problems.

\begin{figure}[t]
\centering
\includegraphics[width=0.8\columnwidth]{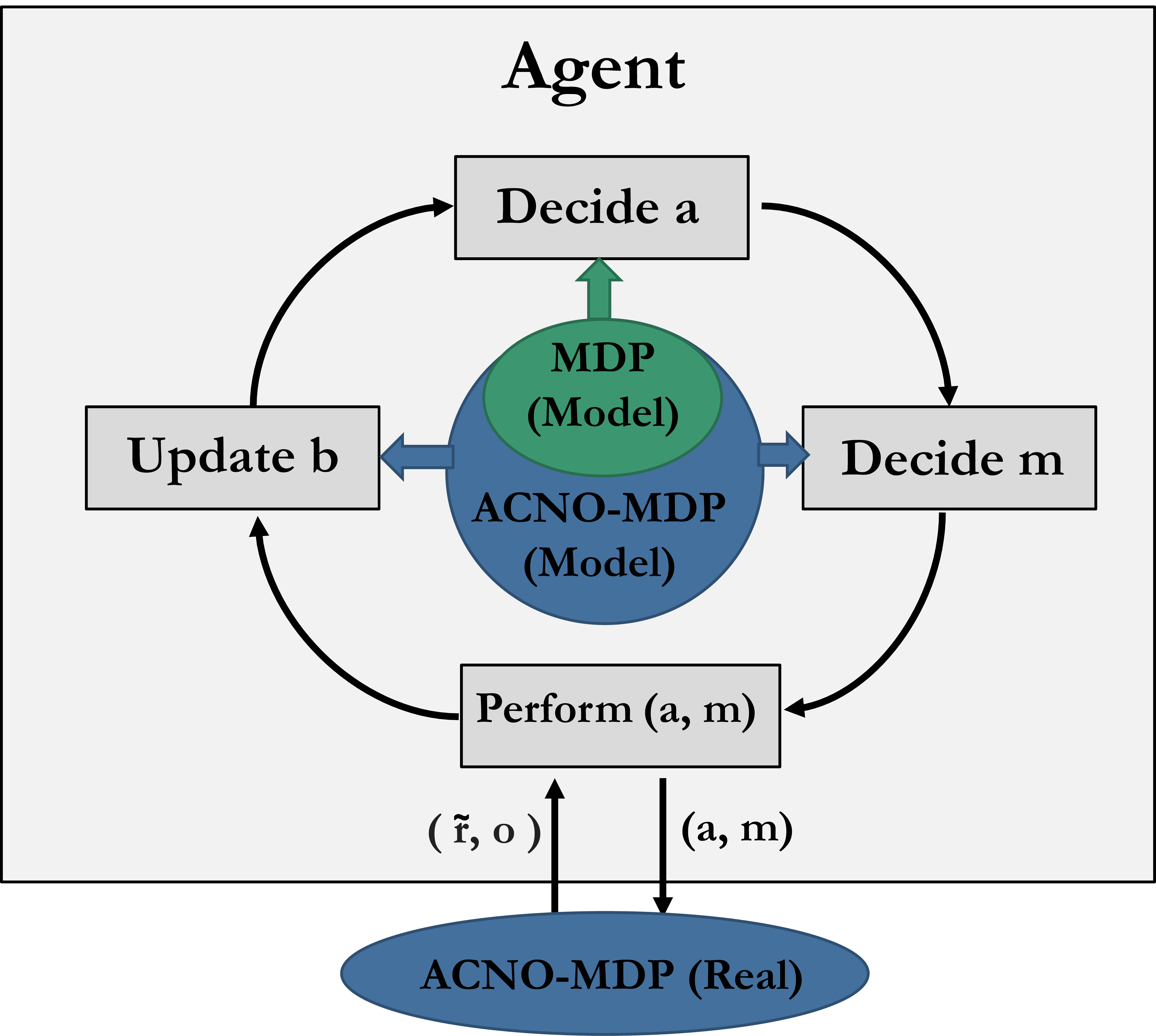} 
\caption{The control-loop for solving ACNO-MDPs using the \emph{act-then-measure} heuristic. At each timestep, a control actions $a$ is chosen according to the current belief state $b$, as though it is a belief over MDP-states. Then, a measurement $m$ is picked without ignoring future state uncertainty, $(a,m)$ is executed and the belief state $b$ is updated accordingly.}
\label{fig:ATMLoop}
\end{figure}

\section{The Act-Then-Measure Heuristic}
\label{sec:globalView}

In this section, we propose the \emph{act-then-measure} (ATM) heuristic for approximating optimal policies in ACNO-MDPs.
Intuitively, this heuristic is based on the observation that control actions and measurements have very different effects, which implies it might be desirable to choose them using separate processes.
Therfore, inspired by the $Q_{\text{MDP}}$ heuristic \citep{DBLP:conf/icml/LittmanCK95}, our heuristic \emph{chooses a control action, assuming all (state) uncertainty will be resolved in the next state(s).}

Following this heuristic, we do not need to consider measurements while deciding control actions, since measuring only affects state uncertainty.
This means we can use a basic control loop (\cref{fig:ATMLoop}), in which we choose control actions before measurements.
Moreover, for control actions computing future returns can be done using MDP approaches, which lets us use the following approximation:
\begin{equation}
\label{eq:QApproxATM}
    Q(b, a)  \approx \sum_{s \in S} b(s) Q_{\text{MDP}}(s,a),
\end{equation}
where $Q_{\text{MDP}}(s, a)$ is the value of taking action $a$ in state $s$ and following the optimal policy of the underlying MDP afterward, and $b$ denotes the current belief, so $b(s)$ is the probability that the current state is $s$.
Since, in general, MDPs are more tractable than POMDPs, this approximation allows for a more efficient policy computation than POMDP-based methods like \emph{observe-then-plan}.
At the same time, in contrast to AMRL-Q, belief states are not approximated, which means state uncertainty for the current belief state is fully considered.
Furthermore, measurements can be made after convergence, and future state uncertainty is considered when deciding whether to measure.

\subsection{Evaluating Measurements}
\label{sec:MeasurmentRegret}

To use the ATM heuristic, we need a principled way to determine whether to take a measurement.
Therefore, we require the ability to \textit{estimate the value of a measurement}.
For this, we start by defining the value function $Q_{\text{ATM}}(b,\tilde{a})$ as the value for executing $\tilde{a}$ in belief state $b$, assuming we follow the ATM-heuristic, i.e. that we choose control actions according to \cref{eq:QApproxATM}.
We will define $Q_{\text{ATM}}(b,\tilde{a})$ using Bellman equations.
For readability, we first introduce the following notations:
\begin{equation*}
\label{eq:b_next}
    b'(s' | b, a) = \sum_{s \in S} b(s) P(s'|s',a), \text{ and }
    \bmax_{\tilde{a} \in \tilde{A}}  = \max_{m \in M} \max_{a \in A},
\end{equation*}
where $b'(s' | b, a)$ represents the probability of transitioning to state $s'$ when taking action $a$ in the current belief state~$b$, and $\bmax$ describes the optimal action pair if the control action is decided before the measurement.

We note that the form of the Bellman equations for $Q_{\text{ATM}}(b, \tilde{a})$ depends on the current measuring action.
If measuring, we can use the information we gain to choose the optimal action to take, giving us the following:
\begin{equation}
\label{eq:bellmanMeasuring}
    Q_{\text{ATM}}(b, \langle a, 1 \rangle) = \hat{r} -c + \gamma \sum_{s' \in S} b'(s' | b, a) \bmax_{\tilde{a} \in \tilde{A}}  Q_{\text{ATM}}(s', \tilde{a}),
\end{equation}
with $\hat{r}$ the expected reward of taking action $a$ in belief state $b$ and $Q_{\text{ATM}}(s,\tilde{a})$ the Q-value of a belief state with $b(s) = 1$.
If not measuring, we can only base our next action on the expected next belief.
We may then define the \emph{belief-optimal action} $\tilde{a}_b$ as follows:
\begin{equation}
\label{eq:a_b}
\begin{split}
    \tilde{a}_b
        & = \arg\bmax_{\tilde{a} \in \tilde{A}} Q_{\text{ATM}}(b_\text{next}(b,a),\tilde{a} ) \\
        & = \arg\bmax_{\tilde{a} \in \tilde{A}} \sum_{s' \in S} b'(s' | b, a) Q_{\text{ATM}}(s', \tilde{a}),
\end{split}
\end{equation}
where the second equality follows from the fact that control actions are chosen in accordance to \cref{eq:QApproxATM}, and is proven in \cref{sec:proofs}.
Using this, we find the following Bellman equation for $m=0$:
\begin{equation}
\label{eq:bellmanNotMeasuring}
    Q_{\text{ATM}}(b, \langle a, 0 \rangle) {=} \hat{r} + \gamma \sum_{s' \in S} b'(s' | b, a) Q_{\text{ATM}}(s',\tilde{a}_b).
\end{equation}
Based on \cref{eq:bellmanMeasuring,eq:bellmanNotMeasuring}, we define the \emph{measuring value}~$\mv(b)$ as the difference between these two Q-values:
\begin{equation}
\label{eq:measureRegret}
\begin{split}
     &\mv(b,a) = Q_{\text{ATM}}(b,\langle a,1\rangle) - Q_{\text{ATM}}(b, \langle a, 0 \rangle) \\
     &{=} {-}c {+} \gamma \sum_{s \in S} b'(s | b, a)\!\! \left[  \bmax_{\tilde{a} \in \tilde{A}}\!Q_{\text{ATM}}(s,\tilde{a}) {-} Q_{\text{ATM}}(s,\tilde{a}_{b}) \right]
\end{split}
\end{equation}
To illustrate, suppose we predict a next belief state $b'$ as given in \cref{fig:MeasurementRegretExample}, and for simplicity assume $\gamma = 1$.
If we choose not to measure, the belief optimal action for $b'$ is $a_0$, yielding a reward of $0.8$ on average.
If instead, we do take a measurement, we can decide to take action $a_0$ if we reach state $s_0$ and action $a_1$ if we reach state $s_1$, yielding a return of $1 - c$.
Following \cref{eq:measureRegret}, the measuring value is thus $1-c-0.8=0.2-c$, meaning it is worth taking a measurement if $c \leq 0.2$.
Generalising this example, we find the following condition for taking measurements:
\begin{equation}
\label{eq:MeasureValueCondition}
m_{\mv}(b,a) = \begin{cases}
        1 & \text{if } \mv(b, a) \geq 0; \\
        0 & \text{otherwise},
    \end{cases}    
\end{equation}
and can define a policy following the ATM heuristic as:
\begin{equation}
\label{eq:piATM}
    \pi_{\text{ATM}}(b) = \langle \max_{a \in A} Q(b,a) . m_{\text{MV}}(b, \max_{a \in A} Q(b,a)) \rangle,
    \end{equation}
with $Q(b,a)$ as defined in \cref{eq:QApproxATM}.

In practice, calculating $Q_{\text{ATM}}(s,\tilde{a})$ in \cref{eq:bellmanMeasuring,eq:bellmanNotMeasuring} for all possible next belief states can be computationally intractable.
An intuitive (over-)approximation to use is $Q_{\text{ATM}}(s, \langle a, m \rangle) \approx Q_{\text{MDP}}(s,a)$, in which case \cref{eq:measureRegret} would likely give an overestimation of $\mv$, leading to more measurements than required.
\begin{figure}[tb]
\centering
\includegraphics[width=0.75\columnwidth]{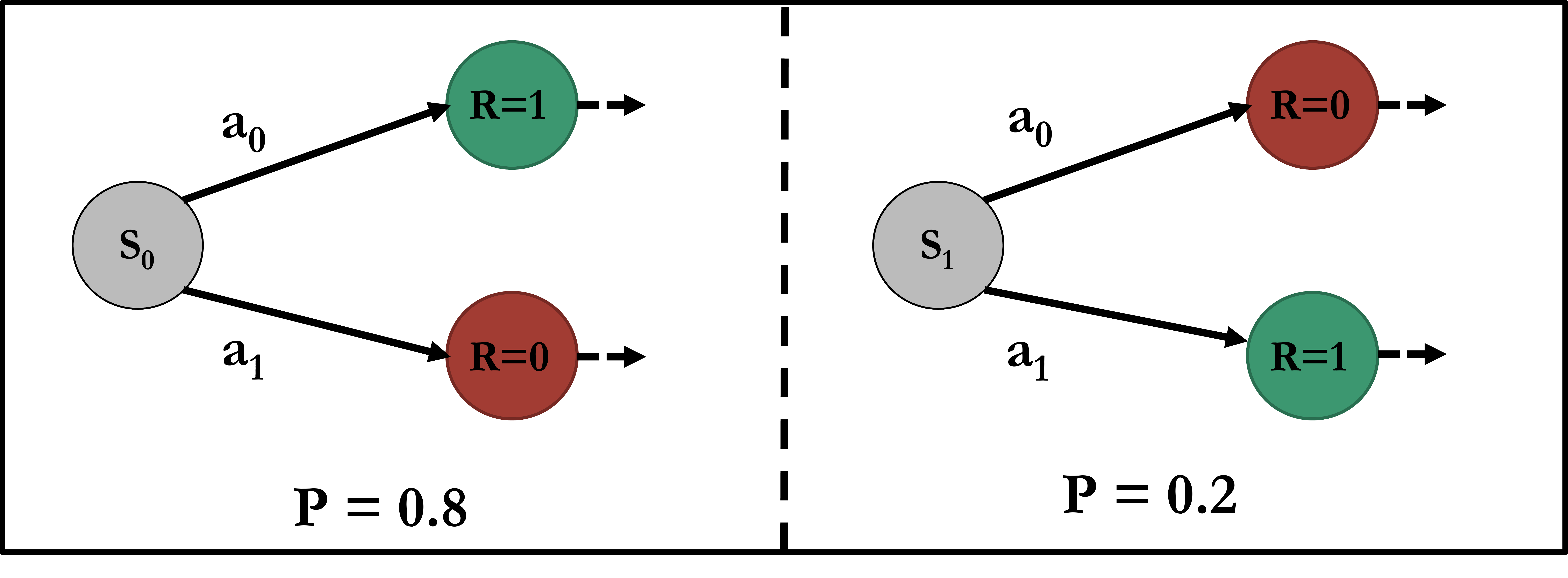}
\caption{An example of a simple belief state. }
\label{fig:MeasurementRegretExample}
\end{figure}

\subsection{Performance Regret of ATM}

Now that $\pi_{\text{ATM}}$ is fully defined, we are interested in its performance loss as compared to an optimal policy $\pi^*$ not restricted by \cref{eq:QApproxATM}.
We first prove the following lemma:
\begin{lemma}
\label{thm:MVOptimal}
    Given a fully known ACNO-MDP $\mathcal{M}$.
    Define $\pi_{\text{ATM}}$ as in \cref{eq:piATM}, and $\pi'_{\text{ATM}}$ as:
$ 
        \pi'_{\text{ATM}}(b) = \langle \max_{a \in A} Q(b,a), \psi(b) \rangle,
$
    with $\psi: b \rightarrow m$. For any choice of $\psi$, the following holds:
    \begin{equation}
    \label{eq:MVOptimal}
    V(\pi_{\text{ATM}}, \mathcal{M}) \geq V(\pi'_{\text{ATM}}, \mathcal{M})
    \end{equation}
\end{lemma}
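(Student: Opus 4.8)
The plan is to read the lemma as the standard optimality of a greedy policy, specialised to a restricted belief-MDP in which the control rule is held fixed and the measurement is the only free decision. First I would record the key observation that, by the definition of the measuring value in \cref{eq:measureRegret}, the rule $m_{\mv}$ of \cref{eq:MeasureValueCondition} sets $m_{\mv}(b,a)=1$ exactly when $\mv(b,a)\ge 0$, equivalently when $Q_{\text{ATM}}(b,\langle a,1\rangle) \ge Q_{\text{ATM}}(b,\langle a,0\rangle)$. Hence $m_{\mv}(b,a)$ selects the measurement maximising $Q_{\text{ATM}}(b,\langle a,m\rangle)$ over $m\in M$, so $\pi_{\text{ATM}}$ from \cref{eq:piATM} is greedy with respect to $Q_{\text{ATM}}$ in the measurement coordinate, given the shared control action $g(b) = \argmax_{a\in A} Q(b,a)$.

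Next I would make the reduction explicit. Since both $\pi_{\text{ATM}}$ and $\pi'_{\text{ATM}}$ commit to the same control action $g(b)$, which does not depend on $m$ (control is decided before measuring), the measurement is the sole decision variable of a belief-MDP whose transitions are: measuring ($m=1$) yields the point-mass belief on the observed successor $s'$, while not measuring ($m=0$) yields the predicted belief $b'(\cdot \mid b, g(b))$ of \cref{eq:b_next}; both incur the immediate scalarized reward $\hat r - C(m)$. I would then verify that \cref{eq:bellmanMeasuring,eq:bellmanNotMeasuring} are precisely the Bellman equations of this restricted belief-MDP: the continuation $\bmax_{\tilde a} Q_{\text{ATM}}(s',\tilde a)$ after measuring is the greedy value at the revealed state, and the continuation $\sum_{s'} b'(s'\mid b,a)\,Q_{\text{ATM}}(s',\tilde a_b)$ after not measuring equals the greedy value at $b'(\cdot\mid b,a)$ by the belief-optimal action identity of \cref{eq:a_b}. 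Writing $V_{\text{ATM}}(b) = \bmax_{\tilde a} Q_{\text{ATM}}(b,\tilde a)$, this exhibits $V_{\text{ATM}}$ as the fixed point of the optimality operator $T^\star$ that maximises over $m$, with $V_{\text{ATM}}(b) = Q_{\text{ATM}}(b, \pi_{\text{ATM}}(b))$.

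The dominance then follows from the usual monotone-operator argument. For the arbitrary rule $\psi$ let $V^\psi(b)$ denote the expected return of $\pi'_{\text{ATM}}$ from belief $b$, and let $T^\psi$ be the corresponding policy-evaluation operator, whose unique fixed point is $V^\psi$ (the belief-MDP is an exact reformulation of the ACNO-MDP, so this fixed point is the true discounted return). Because $T^\star$ maximises over $m$ while $T^\psi$ fixes $m=\psi(b)$, we have $V_{\text{ATM}} = T^\star V_{\text{ATM}} \ge T^\psi V_{\text{ATM}}$ pointwise; applying the monotone $\gamma$-contraction $T^\psi$ repeatedly yields a non-increasing sequence $V_{\text{ATM}} \ge T^\psi V_{\text{ATM}} \ge (T^\psi)^2 V_{\text{ATM}} \ge \cdots$ converging to $V^\psi$, whence $V_{\text{ATM}}(b) \ge V^\psi(b)$ for every $b$. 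Evaluating at the initial belief $b_0$ gives $V(\pi_{\text{ATM}}, \mathcal{M}) = V_{\text{ATM}}(b_0) \ge V^\psi(b_0) = V(\pi'_{\text{ATM}}, \mathcal{M})$, which is \cref{eq:MVOptimal}.

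I expect the main obstacle to be the second step rather than the final dominance: one must check carefully that fixing the control action to $g(b)$ turns the problem into a genuine MDP over beliefs in which measurement is the only choice, and in particular that the two continuation terms in \cref{eq:bellmanMeasuring,eq:bellmanNotMeasuring} are mutually consistent with a single value function $V_{\text{ATM}}$. This hinges on the identity in \cref{eq:a_b}, which itself relies on control actions being chosen via \cref{eq:QApproxATM}, and on the $\bmax$ continuation after measuring agreeing with the control rule $g$ at point-mass beliefs, so that $\pi_{\text{ATM}}$ is internally consistent as the greedy policy of the restricted MDP. Once this consistency is established, optimality of the greedy measurement over any $\psi$ is routine.
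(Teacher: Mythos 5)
Your argument is correct and lands on the same conclusion, but it takes a genuinely different route from the paper. The paper proves the lemma by induction on a switching time: it defines $\pi_N$ as the policy that follows $\pi_{\text{ATM}}$ for the first $N$ steps and $\pi'_{\text{ATM}}$ thereafter, handles $N=1$ by showing that any disagreement between $m_{\mv}$ and $\psi$ that favoured $\psi$ would force the measuring value in \cref{eq:measureRegret} to have the wrong sign (a contradiction with \cref{eq:MeasureValueCondition}), and then propagates the inequality through an induction over the distribution of beliefs reached after $N-1$ steps before letting $N\to\infty$. You instead freeze the control rule, recast the problem as a restricted belief-MDP in which measurement is the only decision, identify \cref{eq:bellmanMeasuring,eq:bellmanNotMeasuring} as its Bellman optimality equations, and run the textbook monotone $\gamma$-contraction argument $V_{\text{ATM}} = T^\star V_{\text{ATM}} \ge T^\psi V_{\text{ATM}} \ge (T^\psi)^2 V_{\text{ATM}} \ge \cdots \to V^\psi$. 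Both are instances of the policy-improvement principle (the paper iterates the better operator starting from $V^\psi$; you iterate the worse operator starting from $V_{\text{ATM}}$), but your packaging is cleaner and sidesteps the paper's delicate inductive step about unequal belief distributions at step $N-1$, which in the printed proof is in fact left with a dangling sentence; what the paper's route buys is that the sign-of-$\mv$ contradiction makes the role of \cref{eq:MeasureValueCondition} very explicit. Two caveats apply to both proofs, and you correctly flag the first: (i) the consistency check that the $\bmax$ continuation in \cref{eq:bellmanMeasuring}, which maximises over both $a$ and $m$, agrees with the fixed control rule $\max_{a\in A} Q(b,a)$ at point-mass beliefs is genuinely needed for $Q_{\text{ATM}}$ to be the value function of a single well-defined restricted MDP, and rests on \cref{eq:a_b}; you identify but do not fully discharge it (neither does the paper). (ii) Both arguments need $\gamma<1$ or a finite horizon --- the appendix restatement of the lemma adds this hypothesis, the main-text statement omits it, and your contraction step uses it just as the paper's limit $N\to\infty$ does.
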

Intuitively, this lemma states that $m_{\text{MV}}$ is the optimal way of deciding $m$ when following the ATM heuristic.
A full proof is given in \cref{sec:proofs}.
Using this lemma, we can find an upper bound for the performance loss of $\pi_{\text{ATM}}$:
\begin{theorem}
\label{thrm:boundExists}
Given a fully known ACNO-MDP $\mathcal{M}$ with an optimal policy $\pi^*$. The performance loss for the policy following the act-then-measure heuristic $\pi_{\text{ATM}}$ (\cref{eq:piATM}) has the following \emph{minimal upper bound}:
\begin{equation}
\label{eq:th1_bound}
    V(\pi^*, \mathcal{M}) - V(\pi_{\text{ATM}}, \mathcal{M}) \leq \sum_t \gamma^t c
\end{equation}

\end{theorem}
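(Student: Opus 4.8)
The plan is to sandwich $V(\pi_{\text{ATM}},\mathcal{M})$ between two reference quantities: bound it from below by the value of the \emph{always-measuring} ATM policy, and then relate that policy to $V(\pi^*,\mathcal{M})$ through the value of the underlying MDP. Concretely, let $\pi'_{\text{ATM}}$ be the policy of \cref{thm:MVOptimal} obtained by taking $\psi(b)\equiv 1$, i.e.\ the policy that uses the same control rule $\max_{a}Q(b,a)$ as $\pi_{\text{ATM}}$ but measures at every step. \cref{thm:MVOptimal} immediately gives $V(\pi_{\text{ATM}},\mathcal{M})\geq V(\pi'_{\text{ATM}},\mathcal{M})$, so it suffices to prove the bound for $\pi'_{\text{ATM}}$.

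First I would split every value into a reward part and a cost part, writing $V(\pi,\mathcal{M})=R(\pi)-K(\pi)$ with $R(\pi)=\E_{\pi}[\sum_t\gamma^t R(s_t,a_t)]$ and $K(\pi)=\E_{\pi}[\sum_t\gamma^t C(m_t)]\geq 0$. Since $\pi'_{\text{ATM}}$ measures at every step, $K(\pi'_{\text{ATM}})=\sum_t\gamma^t c$, whereas $K(\pi^*)\geq 0$. The heart of the argument is then to show that the reward part of $\pi'_{\text{ATM}}$ dominates that of $\pi^*$, i.e.\ $R(\pi'_{\text{ATM}})\geq R(\pi^*)$. Granting this,
\begin{equation*}
V(\pi^*,\mathcal{M})-V(\pi'_{\text{ATM}},\mathcal{M})=\big(R(\pi^*)-R(\pi'_{\text{ATM}})\big)+\big(K(\pi'_{\text{ATM}})-K(\pi^*)\big)\leq 0+\sum_t\gamma^t c,
\end{equation*}
and chaining with \cref{thm:MVOptimal} yields the claim.

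To prove $R(\pi'_{\text{ATM}})\geq R(\pi^*)$ I would argue that $R(\pi'_{\text{ATM}})$ already equals the largest reward any \settingName policy can earn. Because it measures at every step, after the first transition $\pi'_{\text{ATM}}$ always holds a belief concentrated on the true state $s_t$; there \cref{eq:QApproxATM} gives $Q(b,a)=Q_{\text{MDP}}(s_t,a)$, so its control action is MDP-optimal and its reward-to-go is exactly $V_{\text{MDP}}(s_t)$. Its total reward from the initial belief $b_0$ is therefore $\max_a\sum_{s}b_0(s)Q_{\text{MDP}}(s,a)$. On the other hand, for \emph{any} policy the first control action must be committed before $s_0$ can be observed, and the reward-to-go from the resulting successor state is bounded above by the MDP-optimal value $V_{\text{MDP}}$; unrolling one step gives $R(\pi^*)\leq\max_a\sum_{s}b_0(s)Q_{\text{MDP}}(s,a)=R(\pi'_{\text{ATM}})$. (When $s_0$ is known, $b_0=\delta_{s_0}$ and both sides collapse to $V_{\text{MDP}}(s_0)$, which is the cleaner special case.)

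The main obstacle is exactly this initial step. It is tempting to bound $R(\pi^*)$ by $\sum_s b_0(s)V_{\text{MDP}}(s)$, the value under full observability, but this quantity is strictly larger than $R(\pi'_{\text{ATM}})$ whenever $b_0$ is non-degenerate, since measuring reveals only the \emph{successor} state and never $s_0$ itself; using it would leave an irreducible Jensen-type gap and break the bound. The fix is to bound the reward instead by the best \emph{committed} first action, $\max_a\sum_s b_0(s)Q_{\text{MDP}}(s,a)$, which is precisely what $\pi'_{\text{ATM}}$ attains, so no slack survives. Finally, to justify the word \emph{minimal} in the statement I would exhibit a family of \settingNamePlur in which $\pi^*$ acts optimally without ever measuring while the ATM heuristic --- crediting information as if it were always exploited in the next state --- forces $\mv(b,a)\geq 0$ and measures at every step for the same reward, driving $V(\pi^*,\mathcal{M})-V(\pi_{\text{ATM}},\mathcal{M})$ arbitrarily close to $\sum_t\gamma^t c$.
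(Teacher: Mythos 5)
Your proposal is correct and follows essentially the same route as the paper: your reward/cost decomposition $V=R-K$ is just a rephrasing of the paper's auxiliary zero-cost ACNO-MDP $\mathcal{M}_0$, both arguments hinge on applying \cref{thm:MVOptimal} with the always-measuring policy $\psi\equiv 1$ (the paper's $\pi_{\text{Measure}}$), and both establish minimality by exhibiting a tight instance where the heuristic's control choice forces perpetual measuring while $\pi^*$ avoids it. Your extra care about the committed first action under a non-degenerate $b_0$ is a slightly more explicit justification of the paper's one-line claim that $\pi_{\text{Measure}}$ is optimal in the cost-free model, but it is not a different proof.
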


\begin{proof}
We start by proving that \cref{eq:th1_bound} is indeed an  upper bound.
For this, we introduce $\mathcal{M}_{0}$, an ACNO-MDP with the same dynamics and reward function as $\mathcal{M}$, but with $c{=}0$.
In $\mathcal{M}_{0}$, always measuring and taking control actions in accordance to $Q_{\text{MPD}}$ is an optimal policy.
Let $\pi_{\text{Measure}}$ be that policy, than the following holds:
\begin{equation}
\label{eq:proofNoCost}
    V(\pi_{\text{Measure}}, \mathcal{M}_{0}) = V(\pi^*, \mathcal{M}_{0}).
\end{equation}
Since the behaviour of $\pi_{\text{Measure}}$ is indepent of $c$, we can easily relate the expected return of this policy in $\mathcal{M}_0$ to that in $\mathcal{M}$:
\begin{equation}
\label{eq:proofMeasureCost}
     V(\pi_{\text{Measure}}, \mathcal{M}) =  V(\pi_{\text{Measure}}, \mathcal{M}_{0}) - \sum_t \gamma^t c
\end{equation}
Furthermore, we notice $\pi_{\text{Measure}}$ follows the control actions given by $\max_{a \in A} Q(b,a)$.
Thus, via \cref{thm:MVOptimal}:
\begin{equation}
\label{eq:proofATMOpt}
      V(\pi_{\text{ATM}}, \mathcal{M}) \geq  V(\pi_{\text{Measure}}, \mathcal{M})
\end{equation}
Lastly, we note that for a given policy, the expected return in $\mathcal{M}_0$ can never be lower than that in $\mathcal{M}$. Then, in particular:
\begin{equation}
\label{eq:proofOptimalCost}
     V(\pi^*, \mathcal{M}) \leq  V(\pi^*, \mathcal{M}_0)
\end{equation}
Substituting \cref{eq:proofMeasureCost,eq:proofOptimalCost} into \cref{eq:proofNoCost}, then substituting $\pi^*_{\text{ATM}}$ for $\pi_{\text{Measure}}$ following \cref{eq:proofATMOpt}, we find exactly our upper bound.

\begin{figure}[t]
\centering
\includegraphics[width=0.6\columnwidth]{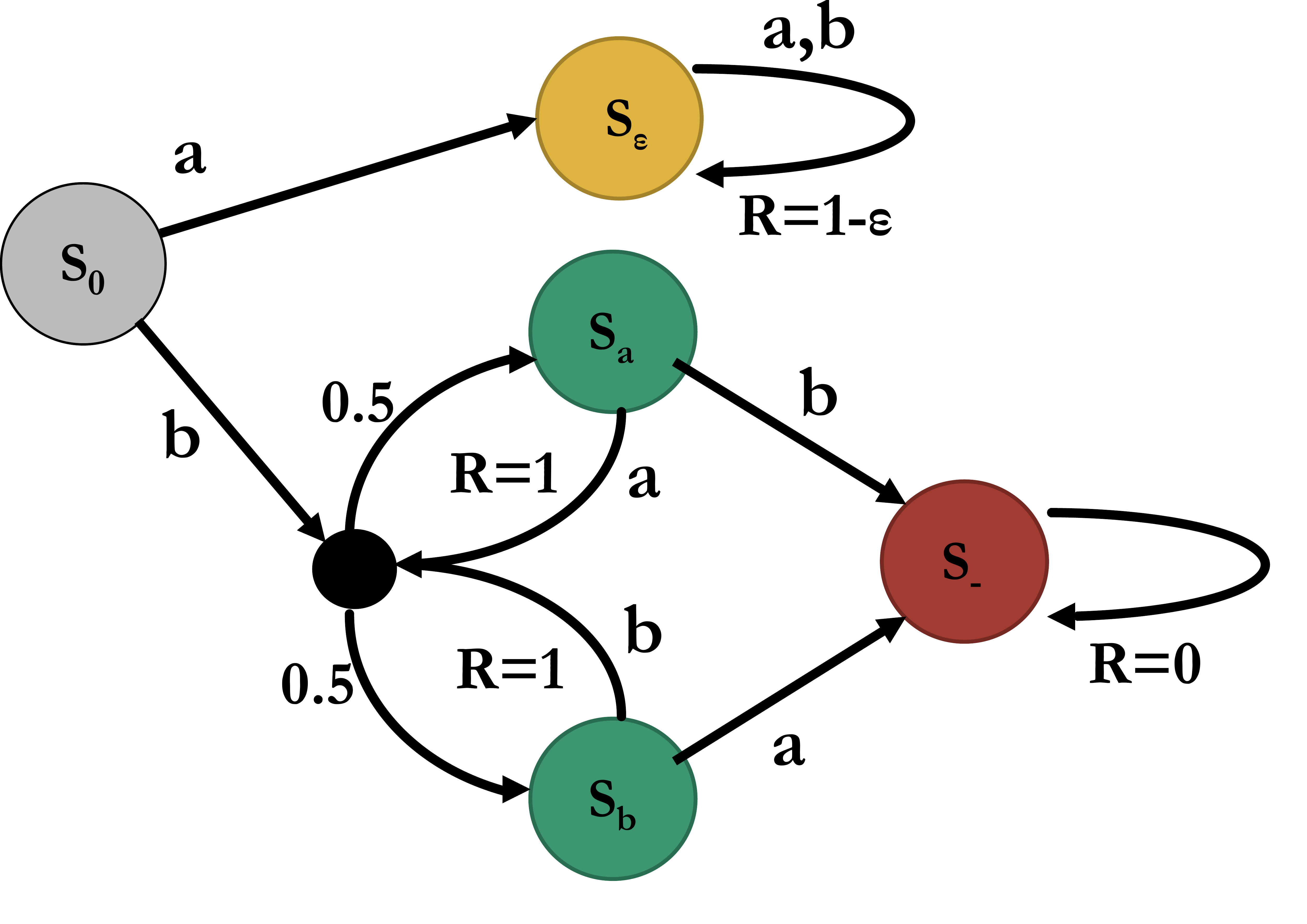} 
\caption{An example where the act-then-measure heuristic can fail for ACNO-MPDs. We assume $c \in [0,0.5]$ and $\epsilon$ is infinitesimally small.}
\label{fig:HeuristicFailExample}
\end{figure}

To prove the given bound is minimal, it suffices to show an ACNO-MDP where the bound is exact, which means no lower bound can exist.
Such an ACNO-MDP is shown in \cref{fig:HeuristicFailExample}.
Using the ATM heuristic, taking action b in state $s_0$ is optimal since both $s_a$ and $s_b$ yield an (infinitesimally) higher expected return than $s_{\epsilon}$ given full state information.
However, after this action the optimal policy would be to measure every step, leading to a lost return of $\sum_t \gamma^t c$.
\end{proof}

In practice, the performance loss of using the ATM heuristic depends on the environment under consideration.
We note the ATM assumption holds in deterministic environments with a single initial state, and has limited impact in environments where $c$ is small relative to the episodic reward.
In contrast, we recall that the AMRL-Q approach does not converge to policies that actively gather information.
This means its performance loss with respect to the baseline policy is unbounded, even when $c$ is small.
\emph{Observe-before-planning} does always converge to $\pi^*$, but in practice may be computationally intractable.

\section{\algName: an ATM-based RL Algorithm for ACNO-MDPs}
\label{sec:algorithm}
To test both the ATM heuristic and measuring value, we implement \emph{dynamic act-then-measure Q-learning} (\algName), an RL algorithm specifically designed for ACNO-MDPs.
A high-level version of the learning loop for an episode is given by \cref{alg:ATMQ-Global}.
The complete pseudo-code is given in \cref{sec:pseudocode}, and a more detailed explanation of all parts of the algorithm is given here.

\paragraph{Belief states.}
To deal with partially unknown states, we implement \textit{discretized belief states} $b_t$, with $b_t(s)$ the estimated probability of being in state $s$ at time $t$. After measuring, belief states are deterministic, i.e. 
\begin{equation}
    b_{t+1}(s) =
    \begin{cases}
        1 & \text{if } s=s_{t+1} \\
        0 & \text{otherwise.}
    \end{cases}
\label{eq:next_belief}
\end{equation}
After a non-measuring action, we instead sample a new belief state, with $ b_{t+1}(s) \sim \sum_{s' \in S} b_t(s') P(s|s',a)$.

\begin{algorithm}[tbp]
\caption{\algName }
\label{alg:ATMQ-Global}
\begin{algorithmic}
\State Initialise transition model $P$, value function $Q$, belief state $b_0$;
\While{episode not completed}
    \State Choose control action $a_t$ according to $Q$ \Comment{\cref{eq:QBelief}}
    \State Choose $m_t$ according to $a_t$  \Comment{\cref{eq:measureCondition}}
    \State Execute $\tilde{a_t} = \langle a_t, m_t \rangle$
    \State Receive reward $r_t$ and observation $o_t$
    \State Determine next belief state $b_{t+1}$ \Comment{\cref{eq:next_belief}}
    \State Update $P$ according to $o_t$ \Comment{\cref{eq:PUpdate,eq:compute_p}}
    \State Update $Q$ according to $r_t$ and $P$ \Comment{\cref{eq:QUpdate}}
    \State Update $Q$ using model-based training 
\EndWhile
\State \textbf{return } $\sum_t \gamma^t r_t$
\end{algorithmic}
\end{algorithm}

\paragraph{Transition model.}
To estimate our transition probabilities, we apply the \textit{Bayesian MDP} approach as introduced by \citet{DBLP:conf/uai/DeardenFA99}.
In this framework, a transition function $P(\cdot\mid s,a)$ is given by a \textit{Dirichlet distribution} $D(s,a)$, as parameterised by $\vec{\alpha} = \{\alpha_{s,a,s_0}, \alpha_{s,a,s_1}, ...\}$.
In the standard MDP-setting, $\alpha_{s,a,s'}$ is given by a (uniform) prior, plus the number of times a transition has already occurred.
For the ACNO-MDP setting, we change this to the number of times it has been \textit{measured}.
Thus, at every step we update our model as follows:
\begin{equation}
\label{eq:PUpdate}
    \alpha_{s,a,s'} \leftarrow 
    \begin{cases}
    \alpha_{s,a,s'}+1 & \text{if } a_{t-1} = a,m_t=1,  \\ & b_t(s) = 1, b_{t+1}(s') = 1;\\
    \alpha_{s,a,s'} & \text{otherwise,}
    \end{cases}
\end{equation}
and define estimated transition probabilities as:
\begin{equation}
    P(s' \mid s,a) = \E\left[s' \mid D(s,a)\right] = \frac{ \alpha_{s,a,s'}}{\alpha_{s,a} },
\label{eq:compute_p}
\end{equation}
where $\alpha_{s,a} = \sum_{s' \in S} \alpha_{s,a,s'}.$

\paragraph{Value function.}
To estimate the values of belief states, we make use of the \textit{replicated Q-learning method}, as introduced in \citet{DBLP:conf/aaai/Chrisman92} and formalized by \citet{DBLP:conf/icml/LittmanCK95}.
In this method, we assume the optimal action for any belief state can be given as a linear function over all states.
With this assumption, we choose a control action $a$ in belief state $b$ as follows:
\begin{equation}
\label{eq:QBelief}
    a_t = \max_{a \in A} Q(b_t,a) = \max_{a \in A}\sum_{s \in S} b_t(s) Q(s,a).
\end{equation}
To update the Q-values, we use the following update rule:
\begin{equation}
\label{eq:QUpdate}
    Q(s,a) \leftarrow (1-\eta_s) Q(s,a) + \eta_s (\tilde{r} + \gamma \Psi(s,a) ),
\end{equation}
with $\eta_s = b(s)\eta$ the weighted learning rate and $\Psi(s,a)$ the estimated future return after state-action pair $(s,a)$:

\begin{equation}
\label{eq:futureReturn}
    \Psi (s,a) = \sum_{s' \in S} P(s' \mid s, a) \max_{a'} Q(s',a').
\end{equation}
Lastly, to incentivize exploration, we create an \textit{optimisitc} variant of $Q$.
For this, we define an exploration bonus $\rb$:
\begin{equation}
    \rb(s,a) = \max \left[0, \frac{N_{\text{opt}} - \alpha_{s,a}}{N_{\text{opt}}}(R_{\text{max}}-Q(s,a))\right],
\end{equation}
    with $R_{\text{max}}$ the maximum reward in the ACNO-MDP and $N_{\text{opt}}$ a user-set hyperparameter.
We use this metric to create an \textit{optimistic value function} $Q_{\text{opt}}$:
\begin{equation}
    Q_{opt}(s,a) = Q(s,a) + \rb(s,a),
\end{equation}
which we use instead of the real Q-value in \cref{eq:QBelief,eq:futureReturn}.
Inspired by R-Max \citep{DBLP:journals/jmlr/BrafmanT02}, our metric initially biases all $Q$-values such that $Q(s, a) = R_{\text{max}}$, and removes this bias in a number of steps.
However, instead of a binary change, $\rb$ makes this transition in $N_{\text{opt}}$ (linear) steps.
In practice, we found this gives a stronger incentive to explore all state-action pairs more uniformly, leading to a faster convergence rate.

\paragraph{Measurement condition.}
In an RL setting, we note there are two distinct reasons for wanting to measure your environment: \textit{exploratory measurements} to improve the accuracy of the model, and \textit{exploitative measurements} which improve the expected return.
For the latter, we have already introduced \emph{measuring value} ($\mv$) in \cref{sec:MeasurmentRegret}.

For the former, we again draw inspiration from R-Max \citep{DBLP:journals/jmlr/BrafmanT02} by introducing a parameter $N_m$, and measure the first $N_m$ times a state-action pair is visited.
We keep track of this number using $\vec{\alpha}$ as specified in \cref{eq:PUpdate}.
Lastly, we specify to take exploratory measurements only if we are certain about the current state, since no model update is performed otherwise (\cref{eq:PUpdate}).

Combining both types of measurements, we construct the following condition for deciding when to measure:
\begin{equation}
\label{eq:measureCondition}
    m_t {=} \begin{cases}
        1 & \text{if } \exists s: b_t(s) {=} 1 \land \alpha_{s,a_t } {<} N_m; \\
        m_{\mv}(b_t, a_t) & \text{otherwise}.
    \end{cases}
\end{equation}  

\paragraph{Model-based training.}
Lastly, inspired by the Dyna-framework \citep{DBLP:journals/sigart/Sutton91}, at each step we perform an extra $N_{\text{train}}$ \textit{training steps}.
For this, we pick a random state $s$ and action $a$, create a \emph{simulated reward}, and use this to perform a Q-update (\cref{eq:QUpdate}).
For this simulated reward, we use the average reward received thus far $R_{s,a}$, which we initialise as 0 and update each step:
\begin{equation}
\label{eq:RUpdate}
    R_{s,a} {\leftarrow}
    \begin{cases}
    \frac{R_{s,a}\cdot\alpha_{s,a}+r_t}{ \alpha_{s,a}+1} & \text{if } a_{t-1} {=} a,  m_t{=}1, b_t(s) {=} 1;\\
    R_{s,a} & \text{otherwise.}
    \end{cases}
\end{equation}

Although originally proposed to deal with changing environments, we mainly use the Dyna approach to speed up the convergence of the Q-table.
This is especially relevant for our setting, where even the Q-values for actions never chosen by our policy need to be accurate to estimate $\mv(b_t,a_t)$.

\section{Empirical Evaluation}
\label{sec:eval}

In this section, we report on our empirical evaluation of \algName in a number of environments. 
We first give a description of the setup of both the algorithms and environments.
Then, we show the results of our experiments, and lastly, we highlight some key conclusions.
All used code and data can be found at \gitLink.

\begin{figure}[tb]
     \centering
     \includegraphics[width=0.52\columnwidth]{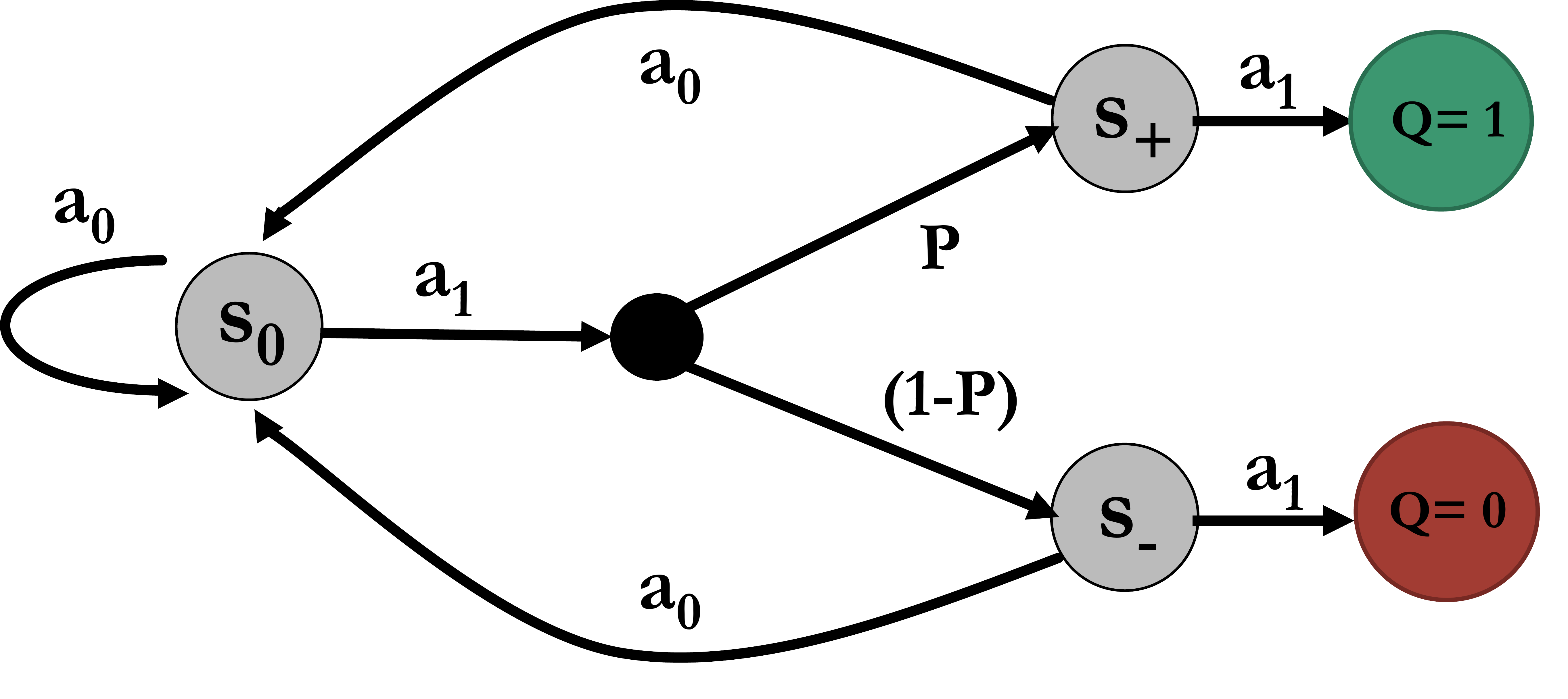}
     \caption{The \textit{measuring value} environment used to test if an agent can determine the value of measuring.}
     \label{fig:regretEnv}
\end{figure}

\subsection{Experimental Setup}
\label{sec:setup}


We test the following algorithms:

\emph{Dyna-ATMQ:}
We implement \algName as described in \cref{sec:algorithm}.
We set $\gamma = 0.95$, $\eta = 0.1$, $N_b = 100$ and $N_{\text{opt}} = N_m = 20$.
For offline training, we choose random states and update their current optimal action with probability $\epsilon_{\text{train}} = 0.5$, and a random different action otherwise.
We use $N_{\text{train}}=25$, but also test a non-dynamic variant with $N_{\text{train}}=0$, which we'll refer to as \algNameShort.

\emph{AMRL-Q:}
For AMRL-Q, we re-implement the algorithm as specified in \citet{DBLP:conf/ai/BellingerC0T21}.
We set $\gamma=0.95$ and $\alpha=0.1$ to match those of \algName, and use initial measurement bias $\beta=0.1$ as described in the paper.
Lastly, we use $\epsilon=0.1$ for the first $90\%$ of all episodes but switch to a fully greedy approach for the last $10\%$.

\emph{ACNO-OTP:}
We implement the \emph{observe-before-planning} algorithm specified in \citet{DBLP:conf/nips/NamFB21}, using an altered version of the original code, which we refer to as ACNO-OTP.
We explain the changes to the original code in more detail in \cref{sec:ACNOChanges}.
For the experiments, we use $\gamma = 0.95$ and \textit{ucb-coefficient} $c=10$.
We perform 25.000 rollouts per step at a max search depth of 25, with between 1800 and 2000 particles.
Since we are interested in results after convergence, we limit the exploitation phase to the last 50 episodes and only compare these last episodes.

\begin{table*}[tb]
\centering
\begin{minipage}{.495\textwidth}\centering
\resizetable{
\begin{tabular}{@{}lcrrcrrcrr@{}}
\toprule
                   && \multicolumn{8}{c}{Measurement   Cost}                                                       \\ \cmidrule{3-10}
                   && \multicolumn{2}{c}{0.05}    && \multicolumn{2}{c}{0.10}    && \multicolumn{2}{c}{0.20}       \\ \cmidrule{3-4} \cmidrule{6-7} \cmidrule{9-10} 
Algorithm          && SR    & M                   && SR    & M                   && SR    & M                      \\ \cmidrule{1-1}
\algNameShort      && 0.94  & 1.30                && 0.76  & 0.50                && 0.78  & 0.11                   \\
\algName           && 0.93  & 1.34                && 0.86  & 1.14                && 0.82  & 0.16                   \\
AMRL               && 0.82  & 0.00                && 0.80  & 0.00                && 0.78  & 0.00                   \\
ACNO-OTP           && 0.94  & 1.18                && 0.81  & 0.00                && 0.79  & 0.00                   \\
\bottomrule
\end{tabular}}
\end{minipage}
\begin{minipage}{.495\textwidth}\centering
\resizetable{
\begin{tabular}{@{}lcrrcrrcrr@{}}
\toprule

                   && \multicolumn{8}{c}{Variant}                                                                           \\ \cmidrule{3-10}
                   && \multicolumn{2}{c}{Deterministic}    && \multicolumn{2}{c}{Semi-slippery}    && \multicolumn{2}{c}{Slippery}     \\ \cmidrule{3-4} \cmidrule{6-7} \cmidrule{9-10}
Algorithm          && SR    & M                            && SR    & M                            && SR    & M                        \\ \cmidrule{1-1}
\algNameShort      && 1.00  & 0.00                         && 0.75  & 2.65                         && 0.02  & 0.76                     \\
\algName           && 1.00  & 0.00                         && 0.65  & 2.95                         && 0.03  & 0.93                     \\
AMRL               && 1.00  & 0.00                         && 0.41  & 0.00                         && 0.03  & 0.00                     \\
ACNO-OTP           && 1.00  & 0.00                         && 0.40  & 0.00                         && 0.04 & 0.00                     \\
\bottomrule
\end{tabular}}
\end{minipage}
\caption{Average scalarized return (SR) and the number of measurements (M) after training, in the measuring value (left) and frozen lake (right) environments.
Results are gathered over 5 repetitions, and present the average over the last 50 episodes.}
\label{tab:Results}
\end{table*}

For our testing, we use the following environments:

\emph{Measuring value:}
As a simple environment to test measuring value, we convert our example from \cref{fig:MeasurementRegretExample} to a graph, as shown in \cref{fig:regretEnv}.
This environment consist of three state $S=\{s_0, s_+, s_-\}$, with $s_0$ as the initial state.
Our agent can choose actions from action space $A=\{a_0,a_1\}$, where $a_0$ always returns the agent to the initial state.
From state $s_0$, taking action $a_1$ results in a transition to $s_+$ with probability $p$ and a transition to $s_-$ with probability $p-1$.
Taking action $a_1$ in the states $s_+$ and $s_-$ ends the episode and returns rewards $r=1$ and $r=0$, respectively.

For this environment, we can explicitly describe its optimal strategy and its expected value.
We notice that depending on $p$ and $c$, such strategies either try to measure the (otherwise indistinguishable) states $s_+$ and $s_-$, or they do not.
When not measuring, our expected return is always $p$.
When measuring, our expected return in $s_+$ is $1-c$, and in $s_-$ it is the expected return of $s_0$ minus $c$.
Combining this, we can calculate the expected return for $s_0$ with a measuring policy:
\begin{equation}
    \E_\pi \left[ \sum_t \gamma^t \tilde{r}_t \right] = \sum_{n=0} \gamma^{2n} \Big( p\cdot \big(1-p\big)^n \big(1-c(n+1)\big)  \Big),
\end{equation}
where $n$ is the number of measurements required before the episode ends.
For our experiments, we set $\gamma = 1$ and $p=0.8$, which means measuring is profitable for $c\leq 0.16$.

\emph{Frozen lake:}
As a more complex toy environment, we use the standard \textit{openAI gym} frozen lake environment \citep{brockman2016openai}, which describes an $n \times n$ grid with a number of `holes'. 
The goal of the agent is to walk from its initial state to some goal state without landing on any hole spaces.
The agent receives a reward $r=1$ if it reaches the goal and $r=0$ otherwise.
The episode ends once the agent reaches the goal state or a hole tile.
In our testing, we will use the pre-defined $4\times 4$ and $8\times8$ map settings, as well as larger maps randomly generated, all with a measuring cost $c=0.05$.
The agent has action space $A=\{\mathrm{left}, \mathrm{down}, \mathrm{right}, \mathrm{up}\}$, but we consider three variations of their interpretation. 
Firstly, we use both the predefined deterministic and non-deterministic (or \textit{slippery}) settings from the standard gym.
In the deterministic case, the agent is always moved in the given direction, while in the slippery case it has an equal probability to move in the given or a perpendicular direction.
We also implement and test a more predictable \textit{semi-slippery} variant, where the agent always moves in the given direction, but has a $0.5$ chance of moving two spaces instead of one.

\begin{figure}[tb]
\centering
\includegraphics[width=0.95\columnwidth]{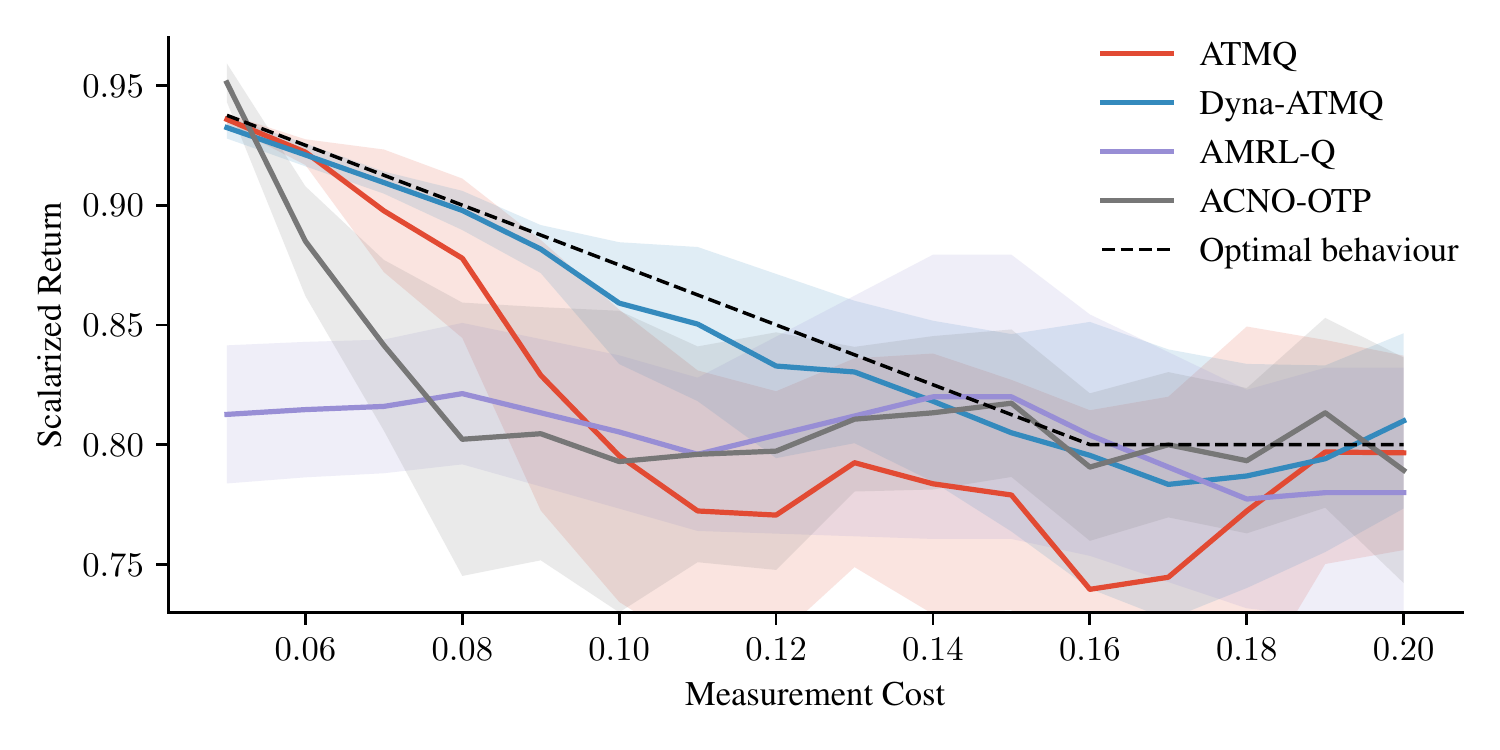}
\includegraphics[width=0.95\columnwidth]{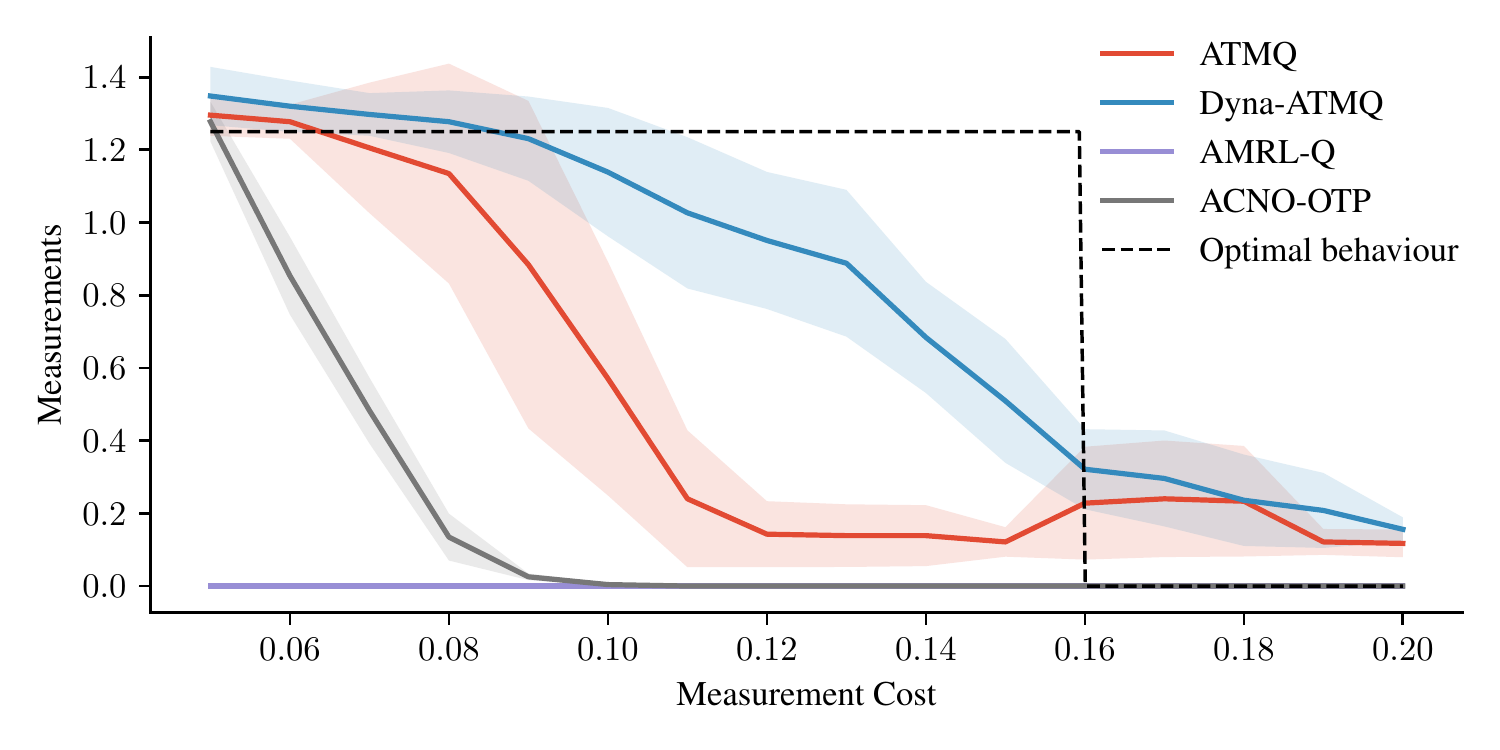}
\caption{Scalarize returns and the number of measures in the measuring value environment, with $p=0.8$ and varying measurement costs. Values are averages over 5 repetitions after convergence.}
\label{fig:RegretResults}
\end{figure}

\begin{figure}[tb]
\centering
\includegraphics[width=0.95\columnwidth]{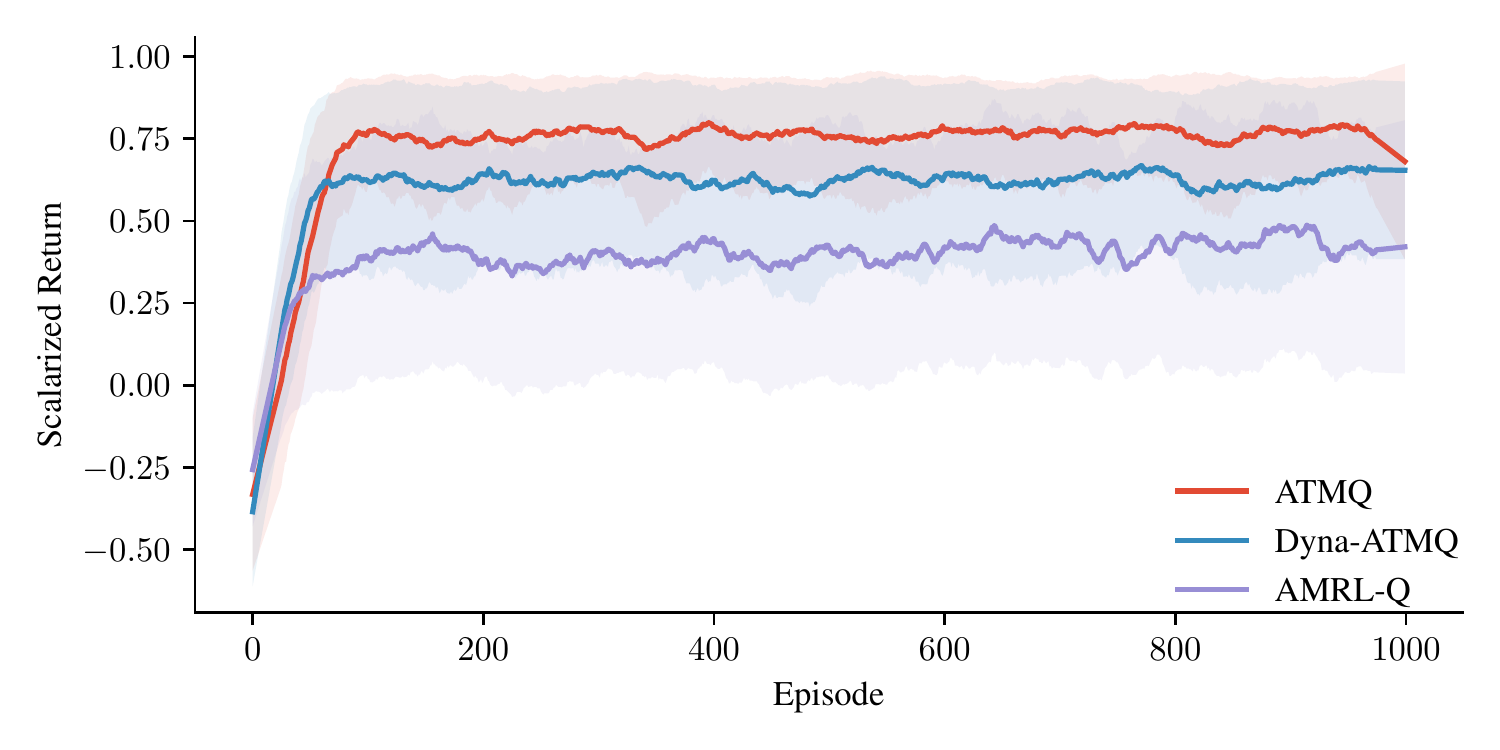}
\includegraphics[width=0.95\columnwidth]{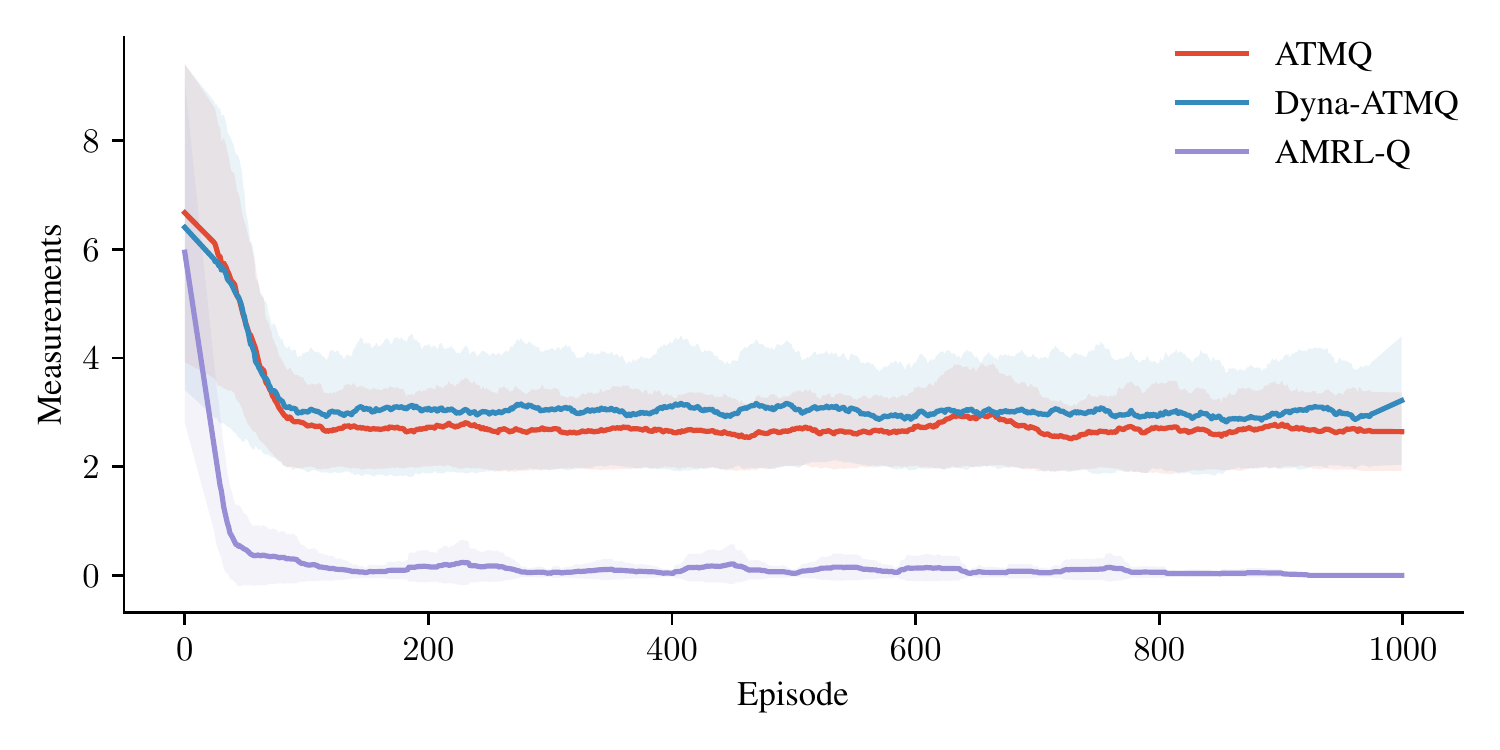}
\caption{Empirical results on semi-slippery $4\times4$ frozen lake environment, gathered over 5 repetitions.}
\label{fig:SmallLakeResults}
\end{figure}

\subsection{Experimental Results}
\label{sec:results}

To test the \emph{measuring value} metric, we run \algName on the measuring value environment for a range of different measurement costs.
The results can be found in \cref{tab:Results} (left) and \cref{fig:RegretResults}.
We notice that both \algName variants, as well as ACNO-OTP, can find close-to-optimal measuring and non-measuring policies.
However, as clearly seen in \cref{fig:RegretResults} (bottom), all algorithms use non-measuring policies for costs where measuring would still be optimal.
The Dyna-variant of \algNameShort performs slightly better than both others, but the difference is minimal, especially in terms of rewards.
In contrast, AMRL-Q always converges to a non-measuring policy, regardless of measurement cost.

To test how the \textit{act-then-measure}-heuristic effects performance for varying amounts of non-determinism, we run tests on all three variants of the $4\times 4$ frozen lake environment.
Results are given in \cref{tab:Results} (right).
For both the deterministic and slippery variants, both versions of \algNameShort perform about on par with both of its predecessors.
For the former, it converges to an optimal non-measuring policy, and for the latter none of the algorithms get a significantly positive result.
However, in the semi-slippery environment, both variants significantly outperform both ACNO-OTP and AMRL-Q, with the non-training variant performing slightly better.
To visualize, training curves for our algorithm and AMRL-Q in this environment are shown in \cref{fig:SmallLakeResults}.

\begin{figure}[t]
    \centering
    \includegraphics[width=0.95\columnwidth]{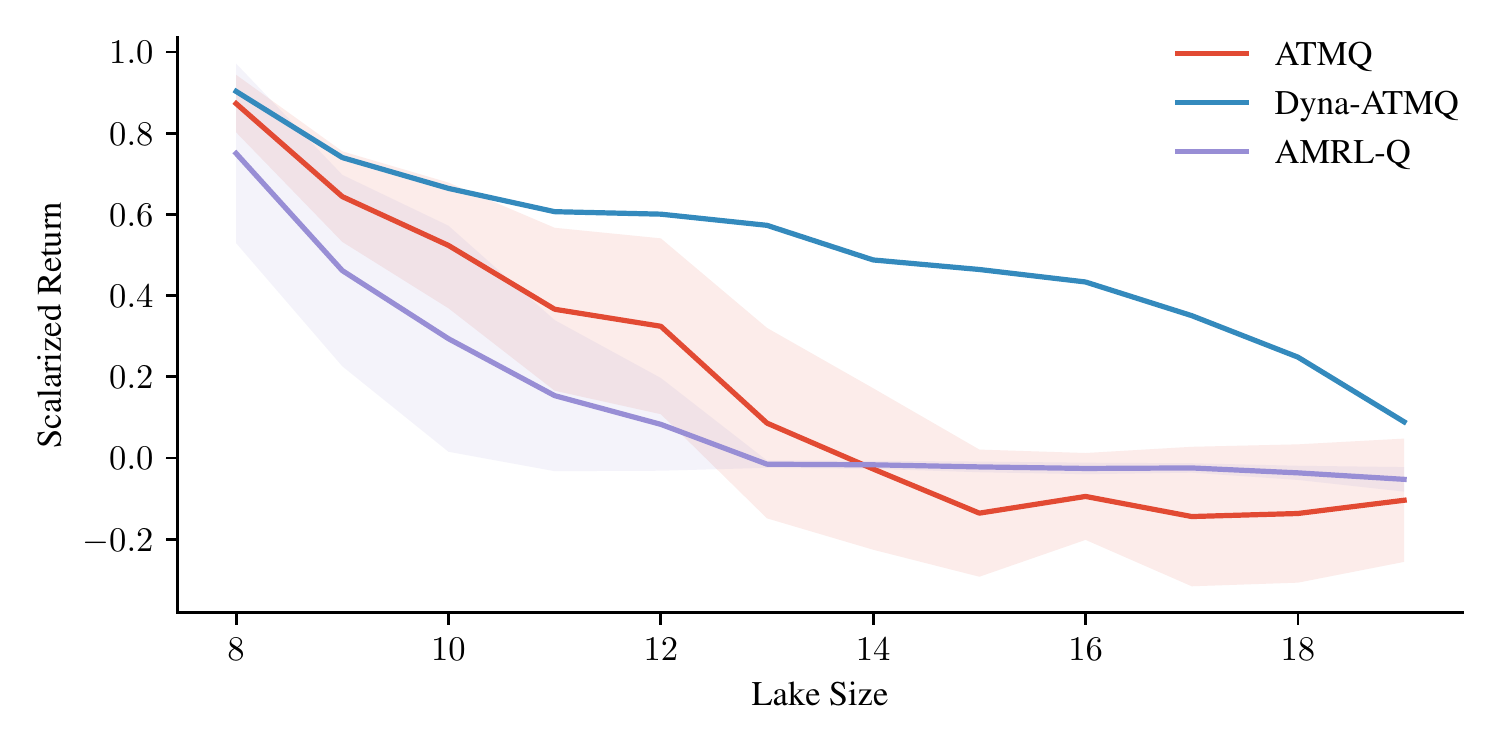}
    \caption{Average scalarized return (after convergence) for semi-slippery frozen lake environment, for different sizes. Results for \algNameShort and AMRL-Q averaged over 5 repetitions, for \algName over 1.}
    \label{fig:LargeLakeResults}
\end{figure}

To test the scalability of algorithms using the \textit{act-then-measure}-heuristic, we test the performance of \algName on a number or larger semi-slippery frozen lake Environments.
Results of both \algNameShort variants and AMRL-Q are shown in \cref{fig:LargeLakeResults}
\footnote{Because of high computation times, we were unable to obtain results for ACNO-OTP in these environments.}
.
Although the performance of both variants drops quickly with the size of the environment, they are able to achieve above-zero returns for far bigger environments than AMRL-Q.
The Dyna-variant performs better for larger environments, even after convergence.

\subsection{Discussion}
\label{sec:discussion}
Based on our results, we make the following claims:

\paragraph{Measuring value is a suitable metric.}
In \cref{tab:Results}(right), we notice \algName converges to a non-measuring policy in the deterministic environments, as expected.
For stochastic environments, we note it makes more measurements than our baselines but gets better or equal returns.
This suggests it correctly identifies when taking measurements is valuable.
We notice suboptimal measuring behaviour only when the difference in return between measuring and non-measuring is small, but note that this could be caused by slight errors in our Q-table.

\paragraph{\algName performs well in small environments.}
In both the measuring value and small frozen lake environments, we find \algName performs better than the bound given by \cref{thrm:boundExists}.
Moreover, it outperforms or equals all baseline algorithms while staying computationally tractable.

\paragraph{\algName is more scalable than current methods.}
\algName stays computationally tractable for larger environments than ACNO-OTP, while yielding higher returns than AMRL-Q.
More generally, we note that our current implementation of the ATM-heuristic approximates the Q-values of states in a way that is known to lead to errors for highly uncertain settings \citep{DBLP:conf/icml/LittmanCK95}.
This suggests a more sophisticated algorithm using the ATM heuristic could improve scalability.

\section{Related Work}
\label{sec:relatedWork}
For the tabular ACNO-MDP setting, three RL algorithms already exist: the AMRL-Q \citep{DBLP:conf/ai/BellingerC0T21}, and the \textit{observe before planning} and the ACNO-POMCP algorithms \citep{DBLP:conf/nips/NamFB21}.
The latter is shown to perform worse than \textit{observe before planning} so is not considered in this paper, the other two are discussed in detail in \cref{sec:setting} and used as baselines in our experiments.
As far as we know, there are no other works with which we can directly compare our results.

Another closely related work is that of \citet{DBLP:journals/ai/Doshi-VelezPR12}.
They introduce a framework in which agents explore a POMDP, but have the additional option to make `action queries' to an oracle.
The method used is comparable to ours and their concept of \emph{Bayesian Risk} resembles the concept of measuring value introduced here.
However, since their method relies on action queries instead of measurements, results cannot easily be compared.

We also note some related papers which explore active measure learning in different contexts.
\citet{DBLP:journals/corr/abs-2011-00825} propose a method for AMRL which relies on a pre-trained neural network to infer missing information.
\citet{DBLP:conf/cdc/GhasemiT19} propose a method to choose near-optimal measurements on a limited budget per step, which can be used to improve pre-computed `standard' POMDP policies.
\citet{DBLP:conf/miccai/BernardinoJLCSC22} investigate diagnosing patients using an MDP approach, in which the action themselves correspond to taking measurements.
\citet{DBLP:conf/nips/MateKXPT20} consider a restless multi-armed bandit setting where taking an action simultaneously resolves uncertainty for the chosen arms.
Lastly, \citet{DBLP:conf/ewrl/Araya-LopezBTC11} study how to approximate an MDP without a reward function.

\section{Conclusion}
\label{sec:conclusion}
In this paper, we proposed the \emph{act-then-measure} heuristic for ACNO-MDPs and proved that the lost return for following it is bounded.
We then proposed \emph{measuring value} as a metric for the value of measuring in ACNO-MDPs.
We describe \algName as an RL algorithm following the ATM heuristic, and show empirically it outperforms prior RL methods for ACNO-MDPs in the tested environments.

Future work could focus on improving the performance of \algName, for example by implemeting more sophisticated action choices and Q-updates, or by taking taking epistemic uncertainty more into account for exploration.
To improve scalability, an interesting line of research is to adapt an already existing method to use the ATM-heuristic.
Model-based methods, such as MBPO \citep{DBLP:conf/nips/JannerFZL19}, are most suitable for such adaptations.
Another possible direction is to investigate the ATM-heuristic in  the more general active measure POMDP setting, in which we lose the assumption of complete and noiseless measurements.
Lastly, our approach could be considered in different multiobjective settings, such as one where the preference function for reward and measurement cost is not known a-priori \citep{MOO}, or where the measuring cost is used as a constraint \citep{DBLP:conf/cdc/GhasemiT19}.

\nocite{DBLP:conf/ewrl/Araya-LopezBTC11}
\nocite{emami2015pomdpy}

\section*{Acknowledgments}
This research has been partially funded by NWO grant NWA.1160.18.238 (PrimaVera) and the ERC Starting Grant 101077178 (DEUCE).

\bibliography{main}

\newcommand{\arXiv}[1]{\emph{arXiv preprint arXiv:#1}}
\begin{thebibliography}{28}
\providecommand{\natexlab}[1]{#1}

\bibitem[{Araya{-}L{\'{o}}pez et~al.(2011)Araya{-}L{\'{o}}pez, Buffet, Thomas,
  and Charpillet}]{DBLP:conf/ewrl/Araya-LopezBTC11}
Araya{-}L{\'{o}}pez, M.; Buffet, O.; Thomas, V.; and Charpillet, F. 2011.
\newblock Active Learning of {MDP} Models.
\newblock In \emph{{EWRL}}, volume 7188 of \emph{Lecture Notes in Computer
  Science}, 42--53. Springer.

\bibitem[{Bellinger et~al.(2021)Bellinger, Coles, Crowley, and
  Tamblyn}]{DBLP:conf/ai/BellingerC0T21}
Bellinger, C.; Coles, R.; Crowley, M.; and Tamblyn, I. 2021.
\newblock Active Measure Reinforcement Learning for Observation Cost
  Minimization.
\newblock In \emph{Canadian Conference on {AI}}. Canadian Artificial
  Intelligence Association.

\bibitem[{Bernardino et~al.(2022)Bernardino, Jonsson, Loncaric, Castellote,
  Sitges, Clarysse, and Duchateau}]{DBLP:conf/miccai/BernardinoJLCSC22}
Bernardino, G.; Jonsson, A.; Loncaric, F.; Castellote, P.~M.; Sitges, M.;
  Clarysse, P.; and Duchateau, N. 2022.
\newblock Reinforcement Learning for Active Modality Selection During
  Diagnosis.
\newblock In \emph{{MICCAI} {(1)}}, volume 13431 of \emph{Lecture Notes in
  Computer Science}, 592--601. Springer.

\bibitem[{Brafman and Tennenholtz(2002)}]{DBLP:journals/jmlr/BrafmanT02}
Brafman, R.~I.; and Tennenholtz, M. 2002.
\newblock {R-MAX} - {A} General Polynomial Time Algorithm for Near-Optimal
  Reinforcement Learning.
\newblock \emph{J. Mach. Learn. Res.}, 3: 213--231.

\bibitem[{Brockman et~al.(2016)Brockman, Cheung, Pettersson, Schneider,
  Schulman, Tang, and Zaremba}]{brockman2016openai}
Brockman, G.; Cheung, V.; Pettersson, L.; Schneider, J.; Schulman, J.; Tang,
  J.; and Zaremba, W. 2016.
\newblock Openai gym.
\newblock \emph{arXiv preprint arXiv:1606.01540}.

\bibitem[{Chrisman(1992)}]{DBLP:conf/aaai/Chrisman92}
Chrisman, L. 1992.
\newblock Reinforcement Learning with Perceptual Aliasing: The Perceptual
  Distinctions Approach.
\newblock In \emph{{AAAI}}, 183--188. {AAAI} Press / The {MIT} Press.

\bibitem[{Dearden, Friedman, and Andre(1999)}]{DBLP:conf/uai/DeardenFA99}
Dearden, R.; Friedman, N.; and Andre, D. 1999.
\newblock Model based Bayesian Exploration.
\newblock In \emph{{UAI}}, 150--159. Morgan Kaufmann.

\bibitem[{Doshi{-}Velez, Pineau, and
  Roy(2012)}]{DBLP:journals/ai/Doshi-VelezPR12}
Doshi{-}Velez, F.; Pineau, J.; and Roy, N. 2012.
\newblock Reinforcement learning with limited reinforcement: Using Bayes risk
  for active learning in POMDPs.
\newblock \emph{Artif. Intell.}, 187: 115--132.

\bibitem[{Dulac{-}Arnold et~al.(2021)Dulac{-}Arnold, Levine, Mankowitz, Li,
  Paduraru, Gowal, and Hester}]{DBLP:journals/ml/Dulac-ArnoldLML21}
Dulac{-}Arnold, G.; Levine, N.; Mankowitz, D.~J.; Li, J.; Paduraru, C.; Gowal,
  S.; and Hester, T. 2021.
\newblock Challenges of real-world reinforcement learning: definitions,
  benchmarks and analysis.
\newblock \emph{Mach. Learn.}, 110(9): 2419--2468.

\bibitem[{Emani, Hamlet, and Crane(2015)}]{emami2015pomdpy}
Emani, P.; Hamlet, A.~J.; and Crane, C. 2015.
\newblock POMDPy: An Extensible Framework for Implementing POMDPs in Python.
\newblock \emph{arXiv preprint arXiv:2004.10099}.

\bibitem[{Ghasemi and Topcu(2019)}]{DBLP:conf/cdc/GhasemiT19}
Ghasemi, M.; and Topcu, U. 2019.
\newblock Online Active Perception for Partially Observable Markov Decision
  Processes with Limited Budget.
\newblock In \emph{{CDC}}, 6169--6174. {IEEE}.

\bibitem[{Guo, Doroudi, and Brunskill(2016)}]{DBLP:conf/aistats/GuoDB16}
Guo, Z.~D.; Doroudi, S.; and Brunskill, E. 2016.
\newblock A {PAC} {RL} Algorithm for Episodic POMDPs.
\newblock In \emph{{AISTATS}}, volume~51 of \emph{{JMLR} Workshop and
  Conference Proceedings}, 510--518. JMLR.org.

\bibitem[{Janner et~al.(2019)Janner, Fu, Zhang, and
  Levine}]{DBLP:conf/nips/JannerFZL19}
Janner, M.; Fu, J.; Zhang, M.; and Levine, S. 2019.
\newblock When to Trust Your Model: Model-Based Policy Optimization.
\newblock In \emph{NeurIPS}, 12498--12509.

\bibitem[{Jimenez-Roa et~al.(2022)Jimenez-Roa, Heskes, Tinga, Molegraaf, and
  Stoelinga}]{jimenez2022deterioration}
Jimenez-Roa, L.~A.; Heskes, T.; Tinga, T.; Molegraaf, H.~J.; and Stoelinga, M.
  2022.
\newblock Deterioration modeling of sewer pipes via discrete-time Markov
  chains: A large-scale case study in the Netherlands.
\newblock In \emph{32nd European Safety and Reliability Conference, ESREL 2022:
  Understanding and Managing Risk and Reliability for a Sustainable Future},
  1299--1306.

\bibitem[{Kormushev, Calinon, and
  Caldwell(2013)}]{DBLP:journals/robotics/KormushevCC13}
Kormushev, P.; Calinon, S.; and Caldwell, D.~G. 2013.
\newblock Reinforcement Learning in Robotics: Applications and Real-World
  Challenges.
\newblock \emph{Robotics}, 2(3): 122--148.

\bibitem[{Lei et~al.(2020)Lei, Tan, Zheng, Liu, Zhang, and
  Shen}]{DBLP:journals/comsur/LeiTZLZS20}
Lei, L.; Tan, Y.; Zheng, K.; Liu, S.; Zhang, K.; and Shen, X. 2020.
\newblock Deep Reinforcement Learning for Autonomous Internet of Things: Model,
  Applications and Challenges.
\newblock \emph{{IEEE} Commun. Surv. Tutorials}, 22(3): 1722--1760.

\bibitem[{Littman, Cassandra, and Kaelbling(1995)}]{DBLP:conf/icml/LittmanCK95}
Littman, M.~L.; Cassandra, A.~R.; and Kaelbling, L.~P. 1995.
\newblock Learning Policies for Partially Observable Environments: Scaling Up.
\newblock In \emph{{ICML}}, 362--370. Morgan Kaufmann.

\bibitem[{Marler and Arora(2004)}]{MOO}
Marler, R.; and Arora, J. 2004.
\newblock Survey of Multi-Objective Optimization Methods for Engineering.
\newblock \emph{Structural and Multidisciplinary Optimization}, 26: 369--395.

\bibitem[{Mate et~al.(2020)Mate, Killian, Xu, Perrault, and
  Tambe}]{DBLP:conf/nips/MateKXPT20}
Mate, A.; Killian, J.~A.; Xu, H.; Perrault, A.; and Tambe, M. 2020.
\newblock Collapsing Bandits and Their Application to Public Health
  Intervention.
\newblock In \emph{NeurIPS}.

\bibitem[{Nam, Fleming, and Brunskill(2021)}]{DBLP:conf/nips/NamFB21}
Nam, H.~A.; Fleming, S.~L.; and Brunskill, E. 2021.
\newblock Reinforcement Learning with State Observation Costs in
  Action-Contingent Noiselessly Observable Markov Decision Processes.
\newblock In \emph{NeurIPS}, 15650--15666.

\bibitem[{Silver and Veness(2010)}]{DBLP:conf/nips/SilverV10}
Silver, D.; and Veness, J. 2010.
\newblock Monte-Carlo Planning in Large POMDPs.
\newblock In \emph{{NIPS}}, 2164--2172. Curran Associates, Inc.

\bibitem[{Sim{\~{a}}o, Suilen, and Jansen(2023)}]{Simao2023spipomdp}
Sim{\~{a}}o, T.~D.; Suilen, M.; and Jansen, N. 2023.
\newblock {Safe Policy Improvement for POMDPs via Finite-State Controllers}.
\newblock In \emph{{AAAI}}.
\newblock ArXiv preprint arXiv:2301.04939.

\bibitem[{Sunberg and Kochenderfer(2022)}]{DBLP:journals/tits/SunbergK22}
Sunberg, Z.; and Kochenderfer, M.~J. 2022.
\newblock Improving Automated Driving Through {POMDP} Planning With Human
  Internal States.
\newblock \emph{{IEEE} Trans. Intell. Transp. Syst.}, 23(11): 20073--20083.

\bibitem[{Sutton(1991)}]{DBLP:journals/sigart/Sutton91}
Sutton, R.~S. 1991.
\newblock Dyna, an Integrated Architecture for Learning, Planning, and
  Reacting.
\newblock \emph{{SIGART} Bull.}, 2(4): 160--163.

\bibitem[{Watkins and Dayan(1992)}]{DBLP:journals/ml/WatkinsD92}
Watkins, C. J. C.~H.; and Dayan, P. 1992.
\newblock Technical Note Q-Learning.
\newblock \emph{Mach. Learn.}, 8: 279--292.

\bibitem[{Yin et~al.(2020)Yin, Li, Pan, Zhang, and
  Tschiatschek}]{DBLP:journals/corr/abs-2011-00825}
Yin, H.; Li, Y.; Pan, S.~J.; Zhang, C.; and Tschiatschek, S. 2020.
\newblock Reinforcement Learning with Efficient Active Feature Acquisition.
\newblock \arXiv{2011.00825}.

\bibitem[{Yu et~al.(2023)Yu, Liu, Nemati, and Yin}]{DBLP:journals/csur/YuLNY23}
Yu, C.; Liu, J.; Nemati, S.; and Yin, G. 2023.
\newblock Reinforcement Learning in Healthcare: {A} Survey.
\newblock \emph{{ACM} Comput. Surv.}, 55(2): 5:1--5:36.

\bibitem[{Zanette and Brunskill(2019)}]{DBLP:conf/icml/ZanetteB19}
Zanette, A.; and Brunskill, E. 2019.
\newblock Tighter Problem-Dependent Regret Bounds in Reinforcement Learning
  without Domain Knowledge using Value Function Bounds.
\newblock In \emph{{ICML}}, volume~97 of \emph{Proceedings of Machine Learning
  Research}, 7304--7312. {PMLR}.

\end{thebibliography}

\appendix

\newpage
\newpage
\section{Pseudo-code \algName}
\label{sec:pseudocode}


\begin{algorithm}[H]
\caption{\textsc{BAM-QMDP}(episodes)}
\label{alg:bamqmdp_run}
\begin{algorithmic}
\For{$s,s' \in S $ and $a \in A$}
    \State Set $\alpha_{s,a,s'} = 1/|S|, N_Q(s,a) = 0 $
    \State Set $Q(s,a) = 0$, $Q_{opt}(s,a) = 1, R(s,a) = 0$
\EndFor
\State Set $r_{total} = 0$
\For{$i < $episodes}
    \State $r_{eps}, Q, Q_{opt}, \vec{\alpha}, R $ 
    \State \hspace{15pt} $\leftarrow$ \textsc{RunEpisode}($Q, Q_{opt}, \vec{\alpha}, R, b_0$)
    \State $r_{total} \leftarrow r_{total} + r_{eps}$
\EndFor
\State $\mathbf{return } r_{total}$
\end{algorithmic}
\end{algorithm}

\begin{algorithm}[H]
\caption{\textsc{RunEpisode} ($Q, Q_{opt},\vec{\alpha},R, b_0$)}
\label{alg:bamqmdp}
\begin{algorithmic}
\State $b \leftarrow b_0$
\State $r_{episode} = 0$
\While{episode not done}
    \State $a \xleftarrow{} $\textsc{FindGreedyAction}($Q_{\text{opt}},b$)
    \State $b_{\text{next}} \xleftarrow{} $\textsc{SampleNextBelief}($P,b,a$)
    \State $m \xleftarrow{} ( \exists s, b(s) = 1 \land \alpha_{s,a} < N_m ) \lor \mathbf{R}(b_{\text{next}}) <c$
    \State Take action $(a,m) \rightarrow (o,r)$
    \If{$m = 1$}
        \State $P,R, \vec{\alpha} \leftarrow $\textsc{UpdateModel}($R, \vec{\alpha}, b,a,o,r$)
        \State $b \leftarrow o$
    \Else
        \State $b \leftarrow b_{\text{next}}$
    \EndIf
    \State $Q, Q_{\text{opt}} \leftarrow $ \textsc{UpdateQ}($P,Q_{\text{}},b,a,r$)
    \State $Q, Q_{\text{opt}} \leftarrow$ \textsc{ModelBasedTraining}($P,Q,R$)
    
    \State $r_{episode} \leftarrow r_{episode}+r$
\EndWhile
\State $\mathbf{return } r_{episode}, Q, Q_{opt}, \vec{\alpha},R$
\end{algorithmic}
\end{algorithm}

\begin{algorithm}[H]
\caption{\textsc{SampleNextBelief}($P,b,a$)}
\label{alg:sample_b}
\begin{algorithmic}
\For{$s \in S$}
    \For{$s' \in S$}
        \State $b_{\text{next,full}}(s') \leftarrow b_{\text{next,full}}(s') + P(b(s),a,s')$
    \EndFor
\EndFor
    \For{$i<N_b$}
        \State $s_{\text{next,i}} \sim b_{\text{next,full}}$
        \State $b_{\text{next}}(s_{\text{next,i}}) \leftarrow b_{\text{next}}(s_{\text{next,i}}) + 1/N_b$
    \EndFor
\State $\mathbf{return } b_{\text{next}}$
\end{algorithmic}
\end{algorithm}

\begin{algorithm}[H]
\caption{\textsc{FindGreedyAction}($Q,b$)}
\label{alg:find_action}
\begin{algorithmic}
\State initialise $Q_b(a) = 0, \forall a \in A$
\For{$ (s,a) \in S\times A$}
        \State $Q_b(a) \leftarrow Q_b(a) + b(s) Q(s,a)$
\EndFor
\State $\mathbf{return } \arg \max_{a\in A}Q_b(a)$
\end{algorithmic}
\end{algorithm}

\begin{algorithm}[H]
\caption{\textsc{UpdateModel}($\vec{\alpha},b,a,o,r$)}
\label{alg:update_P}
\begin{algorithmic}
\If{ $[\exists s \in S | b(s) = 1]$}
    \State $\alpha_{s,a,o} \leftarrow  \alpha_{s,a,o} + 1$
    \State $\alpha_{s,a} \leftarrow \alpha_{s,a} + 1$
    \For{$s' \in S$}
        \State $P(s'\mid s,a) = \alpha_{s,a,s'} / \alpha_{s,a} $
    \EndFor
    \State $R(s,a) \leftarrow \frac{r + (\alpha_{o,a}-1)R(s,a)}{\alpha_{o,a}}$
\EndIf
\State $\mathbf{return } P, \vec{\alpha}, R$
\end{algorithmic}
\end{algorithm}

\begin{algorithm}[H]
\caption{\textsc{UpdateQ}($P,Q,R,b,a,r$)}
\label{alg:update_Q}
\begin{algorithmic}
\For{$s \in S$}
    \State $\eta_{s} \leftarrow b(s)\eta$
    \State $\Psi \leftarrow \sum_{s' \in S}P(s'\mid s,a) \max_{a'}Q(s', a')$
    \State $Q(s,a) \leftarrow [1-\eta_{s}]Q(s,a) + \eta_{s} [r+\gamma \Psi]$
    \State $Q_{\text{opt}}(s,a) \leftarrow Q(s,a) + \text{R-Max*}(s,a) $
\EndFor
\State $\mathbf{return } Q, Q_{\text{opt}}$
\end{algorithmic}
\end{algorithm}

\begin{algorithm}[H]
\caption{\textsc{ModelBasedTraining}($Q, P, R$)}
\label{alg:update_Q_global}
\begin{algorithmic}
\For{$ i < N_{\text{train}}$}
    \State Pick a random $s \in S$
    \For{$s' \in S$}
        \State $b(s') \leftarrow \delta_{s,s'}$
    \EndFor
    \State $a \leftarrow \arg \max_{a} Q(s,a)$ with $p=\frac{1}{2}$, otherwise random
    \State $r \leftarrow R(s,a)$
    \State \textsc{UpdateQ}($P,Q,R,b,a,r$)
    
\EndFor
\State $\mathbf{return } Q, Q_{\text{opt}}$
\end{algorithmic}
\end{algorithm}

\section{Implementation Details for ACNO-OTP}
\label{sec:ACNOChanges}
For ACNO-OTP, we used the publicly available code from \citet{DBLP:conf/nips/NamFB21}, but altered it to work for any openAI environment.
For the exploration phase, we could not rely on the EULER-algorithm \citep{DBLP:conf/icml/ZanetteB19} used in the original, since the frozen lake environment does not follow the requirement that all states are reachable within 4 steps.
Instead, we implement a simple Q-learning method, with as a reward the \emph{state-action counter reward} introduced in \citet{DBLP:conf/ewrl/Araya-LopezBTC11}.
For the exploitation phase, as in the original, we use a publicly available version of POMCP for Python \citep{emami2015pomdpy}.

However, in testing, we notices the negative rewards for measuring actions were leading to drastic under-approximations of the Q-values in the planning phase, in two distinct ways.
Firstly, in the rollout phase of the algorithm, following a random policy in an ACNO-MDP setting is likely to result in many measuring actions and thus negative returns.
This is easily fixed by specifying that $\pi_{\text{rollout}}$ may only contain non-measuring actions, which has no negative impact on the algorithm.
More subtly, back-propagation for some measuring action also has an effect on all actions on the path to it, which might lead to undesirable results.
An example is visualized in \cref{fig:POMCPFail2}, where the Q-value of going from state $s_0$ to state $s_1$ is updated to be $-c$, even though from $s_1$ better paths are present and unexplored.
This problem is somewhat more inherent to POMCP and thus harder to solve.
In our implementation, we included the specification that non-measuring actions could never have negative rewards, which solved the problem for the tested environments.
However, this method is clearly applicable only for environments without negative rewards, and even these might lead to a sub-optimal bias for non-measuring actions.

As a general remark, we note that in history-based planners such as POMCP, the information gained by taking a measurement cannot always be exploited fully.
As visualized in \cref{fig:POMCPFailExample}, the knowledge that measuring and non-measuring actions have the same effect on the environment is not explicit in a history-based representation, even if it is represented in the used model.
Concretely, for POMCP this means that to approximate the value of taking these actions, both need to be sampled separately, which can lead to higher computation times.

\begin{figure}[tb]
    \centering
    \includegraphics[width=0.6\columnwidth]{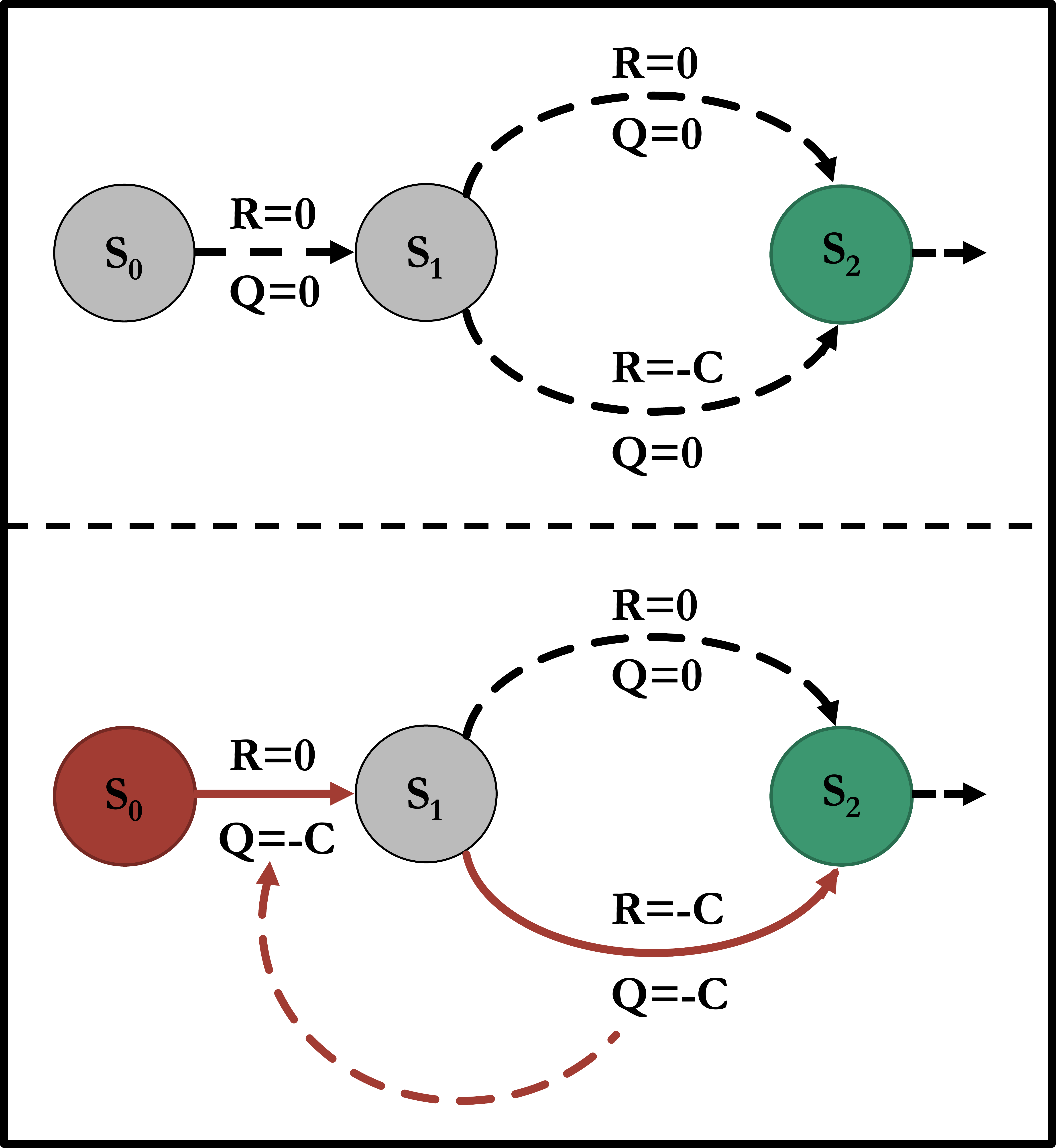}
    \caption{A simplified visualization of an ACNO-MDP where measuring actions lead to undesired effects.}
    \label{fig:POMCPFail2}
\end{figure}

\begin{figure}[t]
\centering
\includegraphics[width=0.9\columnwidth]{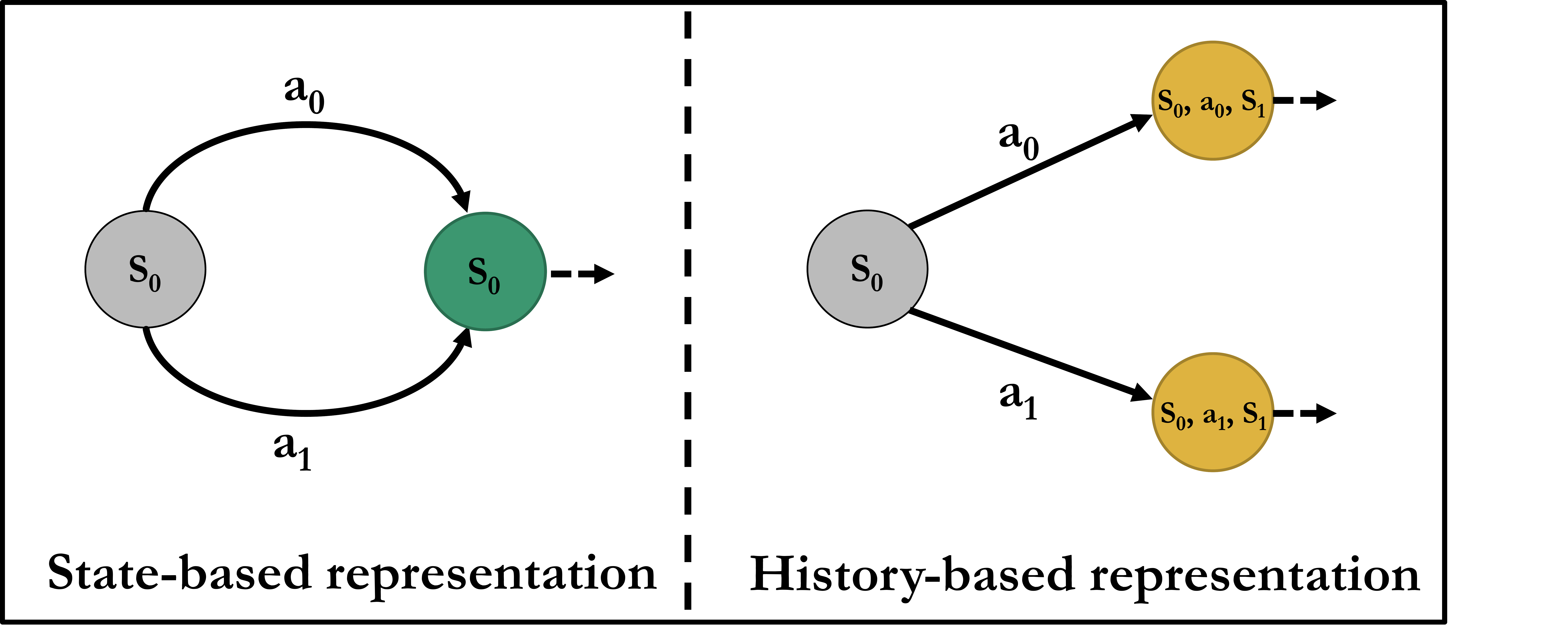} 
\caption{An ACNO-MDP example where a history-based method will perform poorly. Although actions $a_0$ and $a_1$ have the same effect on the state (left), in a history-based representation (right) they are not represented as equal.}
\label{fig:POMCPFailExample}
\end{figure}

\section{Proofs}
\label{sec:proofs}

In this appendix, we provide additional proofs for statements made in the paper, but for which we did not find space for a full proof.

\subsection{Proof of \cref{thm:MVOptimal}}
We start by proving \cref{thm:MVOptimal}, which intuitively states that for a policy in which control actions are chosen using the ATM-heuristic, choosing measurement according to $m_{\mv}$ (\cref{eq:MeasureValueCondition}) is optimal.
For convenience, we restate the lemma here:
\begin{lemma*}
    Given a fully known ACNO-MDP $\mathcal{M}$ with $\gamma < 1$.
    Define $\pi_{\text{ATM}}$ as in \cref{eq:piATM}, and $\pi'_{\text{ATM}}$ as:
    \begin{equation}
        \pi'_{\text{ATM}}(b) = \langle \max_{a \in A} Q(b,a), \psi(b) \rangle,
    \end{equation}
    with $\psi: b \rightarrow m$. For any choice of $\psi$, the following holds:
    \begin{equation}
    \label{eq:MVOptimal_apendix}
    V(\pi_{\text{ATM}}, \mathcal{M}) \geq V(\pi'_{\text{ATM}}, \mathcal{M})
    \end{equation}
\end{lemma*}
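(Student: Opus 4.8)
The plan is to exploit the fact that both $\pi_{\text{ATM}}$ and $\pi'_{\text{ATM}}$ share the \emph{same} control-action map $a(b) = \argmax_{a \in A} Q(b,a)$ by hypothesis, so the only remaining degree of freedom is the binary measurement choice. This reduces the problem to an ordinary MDP whose ``states'' are belief states and whose only action is $m \in \{0,1\}$: choosing $m=1$ incurs the extra cost $c$ and collapses the belief to the deterministic successor $\delta_{s'}$ with $s' \sim b'(\cdot \mid b, a(b))$, whereas $m=0$ sends the belief to the predicted $b'(\cdot \mid b, a(b))$. I would first make this reduction explicit and check that, because the control map is shared, the two policies are directly comparable and differ only through $m$.

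The second step is to identify $Q_{\text{ATM}}(b, \langle a(b), m\rangle)$ with the one-step Bellman backup of the value of $\pi_{\text{ATM}}$ in this reduced MDP. Reading \cref{eq:bellmanMeasuring} and \cref{eq:bellmanNotMeasuring}, each expresses $Q_{\text{ATM}}(b, \langle a, m\rangle)$ as the immediate cost-adjusted reward plus $\gamma$ times the expected continuation value $\bmax_{\tilde a} Q_{\text{ATM}}(\cdot, \tilde a)$ of following $\pi_{\text{ATM}}$ from the successor belief. Writing $V_{\text{ATM}}(b) = \bmax_{\tilde a} Q_{\text{ATM}}(b, \tilde a)$ for this value function, the definition of $\mv$ in \cref{eq:measureRegret} together with the rule $m_{\mv}$ in \cref{eq:MeasureValueCondition} says exactly that $\pi_{\text{ATM}}$ selects the $m$ maximizing $Q_{\text{ATM}}(b, \langle a(b), m\rangle)$, i.e. $\pi_{\text{ATM}}$ is greedy with respect to $V_{\text{ATM}}$ in the measurement coordinate. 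Consequently $V_{\text{ATM}}(b) \geq Q_{\text{ATM}}(b, \langle a(b), \psi(b)\rangle)$ for every belief $b$ and every choice of $\psi$.

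The third step is the standard policy-improvement argument, which is where the hypothesis $\gamma < 1$ is used. Let $T_\psi$ be the Bellman expectation operator of $\pi'_{\text{ATM}}$; since $\gamma < 1$ it is a $\gamma$-contraction with unique fixed point $V(\pi'_{\text{ATM}}, \mathcal{M})$. The inequality from the previous step reads precisely $V_{\text{ATM}} \geq T_\psi V_{\text{ATM}}$ pointwise, because $T_\psi V_{\text{ATM}}(b) = Q_{\text{ATM}}(b, \langle a(b), \psi(b)\rangle)$. Using monotonicity of $T_\psi$ and iterating gives $V_{\text{ATM}} \geq T_\psi V_{\text{ATM}} \geq T_\psi^2 V_{\text{ATM}} \geq \cdots$, whose right-hand side converges to $V(\pi'_{\text{ATM}}, \mathcal{M})$; since $V_{\text{ATM}}$ is the value function of $\pi_{\text{ATM}}$, evaluating at the initial belief yields \cref{eq:MVOptimal_apendix}.

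I expect the main obstacle to be the bookkeeping in the second step: verifying that the two Bellman equations really are one-step backups of the \emph{same} continuation value $V_{\text{ATM}}$ under the shared control map, and in particular that the belief-optimal action $\tilde a_b$ in \cref{eq:bellmanNotMeasuring} has the same control component as $a(b')$ at the successor belief, which is exactly the decomposition \cref{eq:a_b} inherited from \cref{eq:QApproxATM}. It is also worth stressing why a myopic per-belief comparison does not suffice: measuring collapses the belief while not measuring keeps it spread, so the measurement choice alters the entire downstream distribution over beliefs, and only the full dynamic-programming argument establishes global optimality of $m_{\mv}$.
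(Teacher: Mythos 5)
Your proof is correct, but it takes a genuinely different (and arguably cleaner) route than the paper's. The paper argues by induction over a horizon $N$, defining the switched policy $\pi_N$ that follows $\pi_{\text{ATM}}$ for the first $N$ steps and $\pi'_{\text{ATM}}$ thereafter — effectively applying the $\pi_{\text{ATM}}$ backup operator $N$ times to $V(\pi'_{\text{ATM}},\cdot)$ and letting $N\to\infty$; the base case $N=1$ is the per-belief comparison of the two measurement choices via $\mv$, and the inductive step requires a somewhat delicate argument about the induced distributions over belief states $\mathcal{B}^{\pi}_{N-1}$. You instead run the iteration in the opposite direction: you anchor at $V_{\text{ATM}}$, observe that $m_{\mv}$ is by construction greedy in the measurement coordinate with respect to $Q_{\text{ATM}}$ (so $V_{\text{ATM}} \geq T_\psi V_{\text{ATM}}$ pointwise), and then invoke monotonicity and $\gamma$-contraction of $T_\psi$ to conclude $V_{\text{ATM}} \geq \lim_k T_\psi^k V_{\text{ATM}} = V(\pi'_{\text{ATM}},\mathcal{M})$. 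What your version buys is that the one-step inequality is evaluated against $\pi_{\text{ATM}}$'s own continuation value — which is exactly what \cref{eq:measureRegret,eq:MeasureValueCondition} provide — so you avoid the paper's identification $Q^{\pi_1} = Q^{\pi'_{\text{ATM}}}$ and its reasoning about reachable belief distributions; what the paper's version buys is that it never needs to verify that $Q_{\text{ATM}}$ is itself the fixed point of the $\pi_{\text{ATM}}$ evaluation operator. You correctly flag the one real obligation in your step two: that the non-measuring backup \cref{eq:bellmanNotMeasuring} is a backup of the \emph{same} continuation value $V_{\text{ATM}}$ at the spread successor belief, which is precisely the linearity claim in \cref{eq:a_b} proven in the appendix; with that cited, your argument is complete.
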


\begin{proof}
For our proof, we introduce $\pi_N$ as the policy which follows $\pi_{
\text{ATM}}$ for the first $N$ steps of an episode,
then follows $\pi'_{\text{ATM}}$. We note that  $\lim_{N \rightarrow \infty} \pi_N = \pi_{\text{ATM}}$, so the following is equivalent to our lemma:

\begin{equation}
    \lim_{N\rightarrow \infty} V(\pi_N, \mathcal{M}) \geq V(\pi'_{\text{ATM}}, \mathcal{M})
\end{equation}

We will now prove this equation holds via induction over N.
Trivially, we note the equation holds for $N=0$.

For $N=1$, after the current step, both policies are equal, meaning the difference in return between the two is determined only by the current action pair chosen.
We represent this as follows:
\begin{equation}
\label{eq:thm_QEquality}
    \forall b, \tilde{a}: Q^{\pi_1}(b, \tilde{a}) = Q^{\pi'_{\text{ATM}}}(b, \tilde{a}) \equiv Q(b,\tilde{a})
\end{equation}
For equal belief states $b$, we notice that both policies always choose the same control action, which we will denote as $a$.
If both also choose the same measuring action, trivially our lemma holds.
This means we only have to look at the cases where the two policies choose different measuring actions, i.e. where $m_{\text{MV}}(b) \neq \psi(b)$.

Suppose $m_{\text{MV}}(b,a) = 1 \neq \psi(b) = 0$, and suppose $m=0$ gives higher expected returns for this belief.
We represent the difference between the two expected returns using \cref{eq:bellmanMeasuring,eq:bellmanNotMeasuring}, where following  \cref{eq:thm_QEquality}, we may use the same Q-values for both.
This yields the following inequality:
\begin{equation}
\label{eq:bellmanDifference}
\begin{split}
    Q^{\pi_1}(b) & - Q^{\pi'_{\text{ATM}}}(b) = Q(b, \langle a, 1\rangle) - Q(b, \langle a, 0 \rangle ) \\
    & = -c + \gamma \sum_{s \in S} b'(s,a) \big[\bmax_{\tilde{a} \in \tilde{A}} Q(s, \tilde{a}) - Q(s,\tilde{a}_b)\big] \\
    & \leq 0
\end{split}
\end{equation}
We notice that the second line is exactly equal to the definition of measure value given in \cref{eq:measureRegret}.
Comparing the two, the inequality found here implies $\mv(b,a)$ must be negative.
However, following \cref{eq:MeasureValueCondition}, for negative $\mv(b,a)$,  $m_{\mv}(b,a)=0$, while we explicitely specified that $m_{\mv}(b,a)=1$.
We have found a contradiction, so we may assume that if $m_{\text{MV}}(b,a) = 1$, it is always optimal.
For $m_{\text{MV}}(b,a) = 0$, we can follow exactly the same logic, except we expect \cref{eq:bellmanDifference} to be positive instead, in which case we would again find a contradiction.
Thus, we conclude no situation exists where any $\psi$ gives a better return than $m_{\mv}$, proving our theorem for $N=1$.
Moreover, this proof holds starting from any initial belief state $b$, so we may state the following, more general, inequality:
\begin{equation}
\label{eq:lemma_N=1}
    \forall b: Q^{\pi_1}(b) \geq Q^{\pi'_{\text{ATM}}}(b)
\end{equation}

Now that we've proven $N=1$ as a baseline, we use induction for all $N>1$, where we may use the induction hypothesis that $\pi_{\text{ATM}}$ was optimal to follow upto this step, and need to prove that it is still optimal to follow for this step (assuming we follow $\pi'_{\text{ATM}}$ from step $N+1$ onwards).

We define $\mathcal{B}^{\pi}_N$ as the probability distribution over belief states after following policy $\pi$ for $N$ steps.
After $N-1$ steps, the policies $\pi_N$ and $\pi'_{\text{ATM}}$ have then ended up in belief states proportional to $\mathcal{B}^{\pi_N}_{N-1}$ and $\mathcal{B}^{\pi'_{\text{ATM}}}_{N-1}$.
Since via our induction hypothesis $\pi_{N}$ has been optimal for the first N steps, we may assume

If these distributions are not equal, then by our induction hypothesis the expected return for following $\pi'_{\text{ATM}}$ from $\mathcal{B}^{\pi_N}_{N-1}$ must be higher than that of $\mathcal{B}^{\pi'_{\text{ATM}}}_{N-1}$, or more formally:
\begin{equation}
    \E [ Q^{\pi_{\text{ATM}}}(b) | b \sim \mathcal{B}^{\pi_N}_{N-1} ] \geq \E [ Q^{\pi_{\text{ATM}}}(b) | b \sim \mathcal{B}^{\pi'_{\text{ATM}}}_{N-1} ].
\end{equation}
Using transitivity, this means if we prove our theorem for the case where both are equal, we have proven it for all cases.
Since in this case the probability of reaching all belief states are equal for both policies, it suffices to prove optimality if both policies end up in the same belief state $b_{N-1}$.
From $b_{N-1}$, there is only one step left for which $\pi_N$ follows $\pi_{\text{ATM}}$.
We notice, then, that our problem is now equivalent to proving that for any belief state $b_{N-1}$, following $\pi_1$ yields at least equal returns to following $\pi'_{\text{ATM}}$.
This we have already proven (\cref{eq:lemma_N=1})
\end{proof}

\subsection{Linearity in $a_b$}

In this section, we show that the second equality stated in \cref{eq:a_b} holds.
For this, we start by making our claim more explicit.
Writing out the equation for $a_b$ as give in \cref{eq:a_b} in the equation for $Q_{\text{ATM}}(b, \langle a, 0 \rangle)$ given in \cref{eq:bellmanNotMeasuring}, we find the following:
\begin{equation}
\label{eq:QLinearityNotMeasuring}
    Q_{\text{ATM}}(b, \langle a, 0 \rangle) = \hat{r}(b,a) + \gamma \bmax \sum_{s \in S} b'(s | b,a) Q_{\text{ATM}}(s, \tilde{a})
\end{equation}
Proving this equality holds is thus equivalent to proving the equality in \cref{eq:a_b}.
To prove that it holds, we start by giving the Q-value for not measuring in the standard way:
\begin{equation}
    Q_{\text{ATM}}(b, \langle a, 0 \rangle) = \hat{r}(b,a) + \gamma \bmax Q_{\text{ATM}}(b'(s,a), \tilde{a})
\end{equation}
Without loss of generality, let us assume we do not make any measurements for the next $N \in \mathbb{N}$ steps.
We can find the Q-value for any such sequence of $N$ steps via recursively applying $Q_{\text{ATM}}(b, \langle a, 0 \rangle)$, which (for $N\geq1$) looks as follows:
\begin{equation}
\begin{split}
    Q_{\text{ATM}} (b, \langle a, 0\rangle ) & = \hat{r}(b,a) +  \gamma \bmax_{\tilde{a} \in \tilde{A}} Q_{\text{ATM}}(b'(b,a), \tilde{a})\\
    & = \hat{r}(b,a)  +  \gamma [ \max_{a' \in A} \hat{r}(b'(b,a),a') \\
    &  + \gamma \bmax_{\tilde{a}' \in \tilde{A}} Q_{\text{ATM}}(b'(b'(b,a),a'),  \tilde{a}')] \\
    & = ... \\
\end{split}
\end{equation}
We can generalize this pattern as follows:
\begin{equation}
\label{eq:appendix_QNonmeasuringExact}
        Q_{\text{ATM}} (b, \langle a, 0\rangle ) = \sum_{t=0}^{t = N} \gamma^t \hat{r}(b_t,a_t) + \gamma^{N+1} Q(b_{N+1}, \tilde{a}_{N+1}),
\end{equation}
where we use $a_t$ and $\tilde{a}_t$ to denote the choosen (control) action after $t$ non-measuring steps after the current step, and $b_t$ the belief state for this step.

We rewrite our different representation of $Q_{\text{ATM}}(b, \langle a, 0 \rangle)$ (\cref{eq:QLinearityNotMeasuring}) in a similair fashion:
\begin{equation}
\begin{split}
 Q_{\text{ATM}} & (b, \langle a, 0 \rangle) = \hat{r}(b,a) + \gamma \bmax \sum_{s \in S} b'(s | b,a) Q_{\text{ATM}}(s, \tilde{a}) \\
    = & \hat{r}(b,a) + \gamma \max_{a' \in A} \sum_{s \in S} b'(s | b,a) [ \hat{r}(b'(s,a),a') \\
    & + \gamma \bmax \sum_{s' \in S} b'(s' | b'(s,a),a') Q_{\text{ATM}}(s', \tilde{a}') \\
    = & ...
\end{split}
\end{equation}
We notice that we can simplify this by using the fact that $\sum_{s \in S} \hat{r}(b(s),a) = \hat{r}(b, a)$.
Applied to our belief state, this gives the following equality:
\begin{equation}
    \sum_{s \in S} b'(s | b,a) \hat{r}(s,a') = \hat{r}(b'(b,a),a').
\end{equation}
Using this equality at every recursive step, we may generalize the pattern of our equation as follows:
\begin{equation}
\label{eq:appendix_QNonmeasuringApprox}
\begin{split}
    Q_{\text{ATM}}(b, \langle a, 0 \rangle) =  & \sum_{t = 0}^{t=N} \gamma^t r(b_t,a_t) \\
    & + \gamma^{N+1} \sum_{s \in S} b_{N+1}(s) Q_{\text{ATM}}(s, \tilde{a}_{N+1}).
\end{split}
\end{equation}
Comparing \cref{eq:appendix_QNonmeasuringExact,eq:appendix_QNonmeasuringApprox}, we notice that the summation over all rewards cancels out.
Any difference, then, can only be given by the last term.
If $N \neq \infty$, we take a measurement in this last term, in which case we may use $Q_{\text{ATM}}(b, \langle a, 1 \rangle)$ (\cref{eq:bellmanMeasuring}) for $Q_{\text{ATM}}(s, \tilde{a}_{N+1})$ in both equations.
Filling this in, we find they are indeed equal.
If instead we never take a measurement (i.e. $N = \infty$), then the contribution of this last term can be ignored, since $\lim_{N \rightarrow \infty}\gamma^{N} = 0$.
In this case, our equality holds as well.

\end{document}